\newcommand{\beq}{\vspace{0mm}\begin{equation}}
\newcommand{\eeq}{\vspace{0mm}\end{equation}}
\newcommand{\beqs}{\vspace{0mm}\begin{eqnarray}}
\newcommand{\eeqs}{\vspace{0mm}\end{eqnarray}}
\newcommand{\barr}{\begin{array}}
\newcommand{\earr}{\end{array}}
\newcommand{\wv}{\boldsymbol{w}}
\newcommand{\xv}{\boldsymbol{x}}
\newcommand{\zv}{\boldsymbol{z}}
\newcommand{\given}{\,|\,}
\newtheorem{thm}{Theorem} 
\newtheorem{lem}[thm]{Lemma}
\newcolumntype{L}[1]{>{\raggedright\let\newline\\\arraybackslash\hspace{0pt}}m{#1}}
\newcolumntype{C}[1]{>{\centering\let\newline\\\arraybackslash\hspace{0pt}}m{#1}}
\newcolumntype{R}[1]{>{\raggedleft\let\newline\\\arraybackslash\hspace{0pt}}m{#1}}
\newtheorem{property}{Property}
\newcolumntype{P}[1]{>{\centering\arraybackslash}p{#1}}
\algnewcommand\algorithmicparfor{\textbf{parfor}}
\algnewcommand\algorithmicpardo{\textbf{do}}
\algnewcommand\algorithmicendparfor{\textbf{end\ parfor}}
\algnewcommand\INPUT{\item[\textbf{Input:}]}%
\algnewcommand\OUTPUT{\item[\textbf{Output:}]}%
\title{
MCMC-Interactive Variational Inference
} 
\author{Quan Zhang\footnote{
\texttt{\small{quan.zhang@broad.msu.edu}}, Broad College of Business, Michigan State University.}, 
Huangjie Zheng\footnote{
\texttt{\small{huangjie.zheng@utexas.edu}}, the University of Texas at Austin}, 
and Mingyuan Zhou\footnote{
\texttt{\small{mingyuan.zhou@mccombs.utexas.edu}}, McCombs School of Business, the University of Texas at Austin.}
}
\date{\today}
\begin{document}
\begin{spacing}{1.25}
\maketitle

\begin{abstract}
Leveraging well-established MCMC strategies, we propose MCMC-interactive variational inference (MIVI) to not only estimate the posterior in a time constrained manner, but also facilitate the design of MCMC transitions. Constructing a variational distribution followed by a short Markov chain that has parameters to learn, MIVI takes advantage of the complementary properties of variational inference and MCMC to encourage mutual improvement. On one hand, with the variational distribution locating high posterior density regions, the Markov chain is optimized within the variational inference framework to efficiently target the posterior despite a small number of transitions. On the other hand, the optimized Markov chain with considerable flexibility guides the variational distribution towards the posterior and alleviates its underestimation of uncertainty. Furthermore, we prove the optimized Markov chain in MIVI admits extrapolation, which means its marginal  distribution gets closer to the true posterior as the chain grows. Therefore, the Markov chain can be used separately as an efficient MCMC scheme. Experiments show that MIVI not only accurately and efficiently approximates the posteriors but also facilitates designs of stochastic gradient MCMC and  Gibbs sampling transitions.
\end{abstract}
 
{\it Keywords:} Gibbs sampling, stochastic gradient Langevin dynamics, designs of MCMC, Bayesian bridge regression, variational autoencoders
\end{spacing}
\section{Introduction}\label{sec:intro}\vspace{-1mm}
Markov chain Monte Carlo (MCMC) has become a reference method for Bayesian inference, especially for tasks requiring high-quality uncertainty estimation. 
However, its applications to modern machine learning problems are challenged by complex models and big data. A primary reason is that MCMC is often restricted to reversible ergodic chains, like Metropolis-Hastings (MH) \citep{metropolis1953equation,hastings1970monte} and Gibbs sampling \citep{geman1984stochastic}, which require evaluating the likelihood  over the whole data set.
A number of MCMC schemes escaping reversibility with theoretical and/or empirical supports \citep{bierkens2019zig,chen2013accelerating,neal1998suppressing} bring about considerable advantages such as accelerated mixing and enhanced adaptability to non-conjugate models, but their designs often demand significant efforts to achieve both efficacy and efficiency. 

Stochastic gradient MCMCs (SG-MCMCs) \citep{welling2011bayesian,ding2014bayesian,ma2015complete,li2016preconditioned}, which exploit the gradient information and neglect MH rejection steps, have been widely adopted for big data applications. Starting from arbitrary initial samples, 
SG-MCMCs move towards the stationary distribution via a random walk with 
step sizes annealed to zero. Thus it may either need labor-intensive tuning of the step-size annealing schedule, 
or easily suffer from slow mixing or high approximation errors.
Variational inference (VI) approximates posterior $p(\zv\given \xv)$ with variational distribution ${\textstyle q(\zv)}$ by minimizing KL${\textstyle \left(q(\zv)\,||\, p(\zv\given \xv)\right)}$, the Kullback--Leibler (KL) divergence from  $p(\zv\given \xv)$ to ${\textstyle q(\zv)}$ \citep{jordan1999introduction,blei2017variational}. Though ${\textstyle q(\zv)}$ may underestimate uncertainty if its presumed distribution family ($e.g.$, diagonal Gaussian) is not flexible enough, VI is often much faster in finding a high posterior density region than MCMC which explores the whole parameter space by random jumps based on local information~\citep{robert2018accelerating}.

Inspired by the advantages of MCMC and VI that overcome each other's limitations, we start a Markov chain with initial values drawn from an optimized variational distribution $q(\zv)$ so that the convergence can be expedited. 
If marginal distributions of this $q(\zv)$-mixed Markov chain are more flexible than the variational distribution family of $q$,     
there emerge interesting research questions: Can the framework of VI 
curb such a Markov chain from running wild as well as drive it towards the posterior? If yes, how can we design such a Markov chain that is (richly) parameterized and 
jointly optimized with $q(\zv)$
to deliver posterior approximations as good as valid MCMCs?
Therefore, we are motivated to propose MCMC-interactive variational inference (MIVI) for efficient and high-quality uncertainty estimation. 
MIVI admits stochastic-gradient optimizations with a small number of MCMC updates of $q(\zv)$ and allows fast posterior sampling without keeping track of MCMC iterations. Furthermore, leveraging MCMCs that converge to the true posterior, we provide the parameterized Markov chain with an appropriate but adequate amount of flexibility to ease its optimization. 

We encounter two-way difficulties when 
MCMC interacts with VI for mutual improvement. First, given an MCMC scheme, it is nontrivial to minimize the KL divergence from the posterior to 
the marginal distribution of the chain, because the density of the latter is often implicitly-defined by MCMC transitions. Second, even if the KL divergence is computable, it can be arduous to design a Markov chain that moderately improves $q$ without worrying about mode collapse or overdispersion. 
Our proposed MIVI has well addressed these challenges.
To avoid calculating the KL divergence, we use a discriminator 
to estimate a log density ratio \citep{mescheder2017adversarial}.
To design a Markov chain that effectively improves $q$,  MIVI borrows the idea of MCMC and $\text{(semi-)implicit}$ VIs \citep{ranganath2016hierarchical,tran2017hierarchical,yin2018semi,molchanov2018doubly,titsias2018unbiased} and strikes a balance between flexibility and convergence to the true posterior. Concretely, we replace unfavorable components of a valid MCMC scheme by (richly) parameterized functions that is to be learned in the VI framework; we learn step sizes of a SG-MCMC for general-purpose inference and design model-specific Gibbs-sampling-like Markov chains for more accurate estimations at lower computing cost. More importantly, the optimized chain in MIVI can used separately as a valid MCMC. To the best our knowledge, MIVI is the first VI algorithm to utilize Gibbs sampling transitions and to facilitate their potential inspirition-driven designs.


\vspace{-1.5mm}
\section{Method description}\label{sec:method}\vspace{-1.5mm}


MIVI is constructed by a variational distribution ${\textstyle q_{\phi}}$ mixed with a Markov chain, where ${\textstyle q_{\phi}}$ parameterized by $\phi$ is used to initialize $T\in \mathbb{Z}_+$ transitions of the chain.
We use  the marginal distribution of the chain at time $T$ as a refined variational distribution, written as ${\textstyle \tilde{q}^{(T)}_{\eta,\phi}(\zv)=\int h^{(T)}_\eta(\zv\given \zv_0) q_{\phi}(\zv_0) d\zv_0 }$ where ${\textstyle h^{(T)}_\eta}$ parameterized by ${\textstyle \eta}$ is the kernel of $T$ transitions of the chain. We show how to optimize $\phi$ and ${\textstyle \eta}$ in the framework of VI given valid formulations of ${\textstyle h^{(t)}_\eta}$, as well as how to formulate such ${\textstyle h^{(t)}_\eta}$ for monotonically non-increasing ${\textstyle\mbox{KL}(\tilde{q}^{(t)}_{\eta,\phi}(\zv)\,||\,  p(\zv \given \xv))}$ as ${\textstyle t}$ grows. With theoretical support provided, the short Markov chain admits extrapolation and fast posterior simulation. We defer all the proofs to Appendix. When there is no ambiguity, we omit the superscript $(T)$ and denote for brevity the marginal distribution by ${\textstyle \tilde{q}_{\eta,\phi}}$ and the transition by ${\textstyle h_\eta}$.

We first focus on optimizing $\phi$ and $\eta$ given a valid $h_\eta$. Suppose $p_\theta(\xv,\zv)=p_{\theta}(\xv\given \zv)p(\zv)$ is the joint likelihood of data $\xv$ given $\zv$ and prior $p(\zv)$. 
We optimize $\theta$, $\phi$, and $\eta$ to maximize the ELBO:
\begin{align}
{\textstyle
\max\limits_{\theta,\phi,\eta} \mathbb{E}_{\tilde{q}_{\eta,\phi}(\zv)}\log\frac{p_\theta(\xv, \zv)}{\tilde{q}_{\eta,\phi}(\zv)} =\max\limits_{\theta,\phi,\eta} \mathbb{E}_{\tilde{q}_{\eta,\phi}(\zv)}\log\frac{p_\theta(\xv, \zv)}{ q_{\phi}(\zv)} - \mbox{KL}(\tilde{q}_{\eta,\phi}(\zv) \,||\, q_{\phi}(\zv)).\label{eq:objective}
}
\end{align}
The first term on the right-hand side of \eqref{eq:objective} is simple to estimate if the transition $h_\eta$ is reparameterizable. Difficulty lies in $\mbox{KL}(\tilde{q}_{\eta,\phi}(\zv) \,||\, q_{\phi}(\zv))$ because marginal distribution $\tilde{q}_{\eta,\phi}(\zv)$ is not always in closed form. To circumvent the difficulty we use a discriminator to estimate  $\log\frac{\tilde{q}_{\eta,\phi}(\zv)}{q_{\phi}(\zv)}$ which only requires to draw random samples from the two distributions \citep{mescheder2017adversarial}. Specifically, with fixed $\tilde{q}_{\eta,\phi}(\zv)$ and $q_{\phi}(\zv)$, an optimal discriminator that is able to distinguish samples from the two distributions will be $D^*(\zv) = \log \tilde{q}_{\eta,\phi}(\zv) -\log q_{\phi}(\zv)$ that solves
\begin{align}
{\textstyle
\max\nolimits_D  \mathbb{E}_{\tilde{q}_{\eta,\phi}(\zv)}\log \sigma(D(\zv)) + \mathbb{E}_{q_{\phi}(\zv)}\log (1-\sigma(D(\zv))),
\label{eq:gan_loss}
}
\end{align} 
where $\sigma(\cdot)$ is the sigmoid function. Consequently, \eqref{eq:objective} turns out to be 
\begin{align} 
{\textstyle
\max\nolimits_{\theta,\phi,\eta} \mathbb{E}_{\tilde{q}_{\eta,\phi}(\zv)}\left[\log{p_{\theta}(\xv, \zv)}-\log{q_{\phi}(\zv)} - D^*(\zv) \right].\label{eq:obj_discr}
}\end{align} 
\vspace{-8mm}
\subsection{Optimization}\vspace{-2mm}
Theoretically, the ELBO \eqref{eq:obj_discr} can be maximized if the discriminator is flexible enough. In practice, however, the saturation of the sigmoid function in the cross-entropy loss of \eqref{eq:gan_loss} undermines the power of $D$ to distinguish samples from $q_{\phi}$ and $\tilde{q}_{\eta,\phi}$. Concretely, if $q_{\phi}$ is far from $\tilde{q}_{\eta,\phi}$, the optimization procedure encourages large $D$, driving $\sigma(D)$ to approach value $1$ which is a saturation region of the sigmoid function, and consequently, the diminished gradient significantly slows down $D$ from getting bigger. Meanwhile, when maximizing the ELBO of \eqref{eq:obj_discr} with an under-optimized discriminator $D$ for $\mbox{KL}(\tilde{q}_{\eta,\phi}(\zv) \,||\, q_{\phi}(\zv))$, a small increase of $D$ cannot compensate for a much larger increase of the cross entropy $-\mathbb{E}_{\tilde{q}_{\eta,\phi}}\log{q_{\phi}(\zv)}$ if $q_{\phi}$ and $\tilde{q}_{\eta,\phi}$ are too far from each other. In short, a big discrepancy between $q_{\phi}$ and $\tilde{q}_{\eta,\phi}$ impedes optimizing the discriminator and a poor discriminator further spaces the two distributions. This vicious circle often makes \eqref{eq:obj_discr} fail to increase $\mathbb{E}_{\tilde{q}_{\eta,\phi}(\zv)}\log{p_\theta(\xv, \zv)}$ and hence brings about poor estimations of $\tilde q_{\eta, \phi}$ and $q_{\phi}$ that drift apart from each other.

Even if the discriminator is so flexible that it is unaffected by the vicious circle, optimizing \eqref{eq:obj_discr} by gradient ascent with respect to $\phi$ can be intractable because $D^*$ itself, found by \eqref{eq:gan_loss}, depends on $\phi$. The problem of calculating this gradient 
cannot be solved by the strategy of \citet{mescheder2017adversarial} after the Markov chain is introduced. To circumvent the two aforementioned difficulties when using the discriminator, MIVI reformulates the objective by maximizing 
a lower bound of \eqref{eq:obj_discr} with respect to $\theta$ and $\eta$ given optimal $D^*$ and $\phi^*$ that are obtained by  two  auxiliary optimization problems. This lower bound and the two auxiliary optimization problems are expressed as 
\begin{align}
&\max\nolimits_{\theta,\eta} \mathbb{E}_{\tilde{q}_{\eta,\phi^*}(\zv)}\left[\log{p_{\theta}(\xv, \zv)}-\log{q_{\phi^*}(\zv)} - D^*(\zv) \right] \label{eq:obj_lrbnd}, \\
&D^* = \arg\max\nolimits_D  \mathbb{E}_{\tilde{q}_{\eta,\phi^*}(\zv)}\log \sigma(D(\zv)) + \mathbb{E}_{q_{\phi^*}(\zv)}\log (1-\sigma(D(\zv)))\label{eq:gan_loss2}, \\
&\phi^* = \arg\min\nolimits_{\phi} -\mathbb{E}_{\tilde{q}_{\eta,\phi}(\zv)}\log{q_{\phi}(\zv)}. \label{eq:xentropy}
\end{align}
It is straightforward to take the gradient of \eqref{eq:obj_lrbnd} and \eqref{eq:xentropy} with respect to  $\theta$ and $\phi$, respectively.
More importantly, the following property overcomes the difficulty in taking the gradient of $D^*$ with respect to $\eta$ when maximizing \eqref{eq:obj_lrbnd} under the assumption of reparameterizable Markov chain transitions.
\begin{property}\label{prop:gradient} \vspace{-1mm}
Suppose 
$h_\eta$
is reparameterizable, which means there exists a deterministic vector-valued function ${\textstyle f_\eta}$ and a random vector $\varepsilon$ such that
${\textstyle \zv^{(T)}\sim \tilde q_{\eta,\phi}}$ is equivalent to 
${\textstyle \zv^{(T)}=f_\eta(\zv^{(0)}, \varepsilon)}$
where ${\textstyle \zv^{(0)}\sim q_{\phi}(\zv)}$. The gradient of \eqref{eq:obj_lrbnd} with respect to ${\textstyle\eta}$ is equal to $$
\resizebox{0.91\hsize}{!}{$
\textstyle \small
\mathbb{E}_{\varepsilon}  \big[ \nabla_{\eta}\log p_{\theta}(\xv, f_\eta(\zv^{(0)}, \varepsilon)) -\nabla_{\eta}\log q_{\phi^*}(f_\eta(\zv^{(0)},  \varepsilon))
-(\nabla_{\eta}f_\eta(\zv^{(0)}, \varepsilon)) ( \frac{dD^*(\zv)}{d \zv}\,\big |\,_{\zv=f_\eta(\zv^{(0)},  \varepsilon)})\big].$}
$$
\end{property}\vspace{-3mm}

\subsection{Formulation of Markov chain transitions}\vspace{-2mm}
We have discussed the optimization of $\theta$, $\phi$ and $\eta$ in MIVI. But MIVI makes sense only if $\tilde q_{\eta,\phi}$ is a better posterior approximation than $q_\phi$. Yet to be determined is the formulation of a valid transition $h_\eta$ that keeps pushing $\tilde q_{\eta,\phi}(\zv)$ closer to $p(\zv\given \xv)$ and thus enables extrapolation of the short Markov chain. We utilize stochastic gradient Langevin dynamics (SGLD)  \citep{welling2011bayesian} as a general-purpose solution and Gibbs sampling
for model-specific but more efficient inference. 

{\bf 
SGLD
~~~} 
So far $h_\eta$ being reparameterizable is the only assumption of MIVI on the Markov chain.
Consequently, SGLD can be incorporated in MIVI and universally applied, as it approximates posteriors with stochastic gradient descent and injected Gaussian noise. Concretely, for a mini batch $x$ of size $n$ from training data of size $N$, a variable $z$ at discrete time $t$ of SGLD is updated by
\begin{align}
{\textstyle
z^{(t)} = z^{(t-1)}+
 \frac{\eta_t}{2}[\nabla_{z} \log p(z^{(t-1)}) +\frac{N}{n} \nabla_{z} \log p(x \given z^{(t-1)})] + \epsilon_t,~~~\epsilon_t \sim \mathcal{N}(0,\eta_t)\label{eq:sgld}
 }
\end{align}
where $\eta_t$ is the step size at time $t$.
With a long run and diminishing $\eta_t$, SGLD proceeds through two phases \citep{welling2011bayesian}: the first is the phase of stochastic optimization in which $p(\xv,\zv)$ is being maximized, and the second is the phase of Langevin dynamics 
in which a random walk is approximating the posterior sampling.   With standard assumptions \citep{khasminskii2011stochastic,vollmer2016exploration} to guarantee ergodicity and diminishing step sizes, 
SGLD converges to a stationary distribution that well approximates the true posterior. Particularly, \citet{teh2016consistency} provide conditions
under which SGLD converges to the posterior and find consistent posterior estimators with asymptotic normality. 
The following lemma validates the use of any $T$ steps of SGLD transitions in MIVI and an extrapolation ($i.e.,$ more than $T$ steps of transitions). 
\begin{lem}[Page 81 of \citet{cover2006elements}]\label{lemma:SG-MCMC}\vspace{-1mm}
Suppose $\zv$ are variables on a Markov chain $M$ with the stationary distribution $\pi(\zv)$.
Let $\mu^{(t)}$ be any distribution on the state space of $M$ at $t$ and $\mu^{(t+1)}$ be the marginal distribution after one transition from $\mu^{(t)}$. Let $q$ denote the mass/density function of variables $\zv^{(t)}\sim \mu^{(t)}$ or $\zv^{(t+1)}\sim \mu^{(t+1)}$. We have ${\textstyle  \emph{\mbox{KL}}(q(\zv^{(t)})\,||\, \pi(\zv))\geq\emph{\mbox{KL}}(q(\zv^{(t+1)})\,||\, \pi(\zv)).}$
\end{lem}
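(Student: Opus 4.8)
The plan is to prove the statement as an instance of the data-processing inequality for Markov chains: applying one step of a transition kernel that fixes $\pi$ cannot increase the KL divergence to $\pi$. The engine of the argument is the chain rule for KL divergence applied to the joint law of two consecutive states, combined with the nonnegativity of KL and the stationarity of $\pi$.

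First I would fix the one-step transition kernel $P(\zv'\given\zv)$ of $M$, so that $\mu^{(t+1)}$ has density $\mu^{(t+1)}(\zv') = \int P(\zv'\given\zv)\,\mu^{(t)}(\zv)\,d\zv$ and stationarity reads $\pi(\zv') = \int P(\zv'\given\zv)\,\pi(\zv)\,d\zv$. Then I would introduce two joint distributions on the pair $(\zv^{(t)},\zv^{(t+1)})$: the chain started from $\mu^{(t)}$, with density $r(\zv,\zv') = \mu^{(t)}(\zv)\,P(\zv'\given\zv)$, and a reference, with density $s(\zv,\zv') = \pi(\zv)\,P(\zv'\given\zv)$. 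By construction $r$ has marginals $\mu^{(t)}$ (on $\zv$) and $\mu^{(t+1)}$ (on $\zv'$), while $s$ has marginals $\pi$ and $\pi$, the latter by stationarity.

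Next I would expand $\mbox{KL}(r\,||\,s)$ in the two possible orders. Factoring on $\zv$ first, the forward conditionals of $r$ and $s$ coincide, both being $P(\cdot\given\zv)$, so the chain rule gives $\mbox{KL}(r\,||\,s) = \mbox{KL}(\mu^{(t)}\,||\,\pi) + \E_{\zv\sim\mu^{(t)}}\mbox{KL}\big(P(\cdot\given\zv)\,||\,P(\cdot\given\zv)\big) = \mbox{KL}(\mu^{(t)}\,||\,\pi)$. Factoring on $\zv'$ first, the chain rule gives $\mbox{KL}(r\,||\,s) = \mbox{KL}(\mu^{(t+1)}\,||\,\pi) + \E_{\zv'\sim\mu^{(t+1)}}\mbox{KL}\big(r(\cdot\given\zv')\,||\,s(\cdot\given\zv')\big)$, where $r(\cdot\given\zv')$ and $s(\cdot\given\zv')$ are the backward conditionals. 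Since every KL term on the right is nonnegative, the second identity gives $\mbox{KL}(r\,||\,s)\ge\mbox{KL}(\mu^{(t+1)}\,||\,\pi)$; chaining this with the first identity yields $\mbox{KL}(\mu^{(t)}\,||\,\pi) = \mbox{KL}(r\,||\,s)\ge\mbox{KL}(\mu^{(t+1)}\,||\,\pi)$, which is the claim with $q(\zv^{(t)})=\mu^{(t)}$ and $q(\zv^{(t+1)})=\mu^{(t+1)}$.

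The main point requiring care, rather than a genuine obstacle, is the measure-theoretic bookkeeping for continuous state spaces: one needs a regular conditional distribution (disintegration) to define the backward conditionals $r(\cdot\given\zv')$, $s(\cdot\given\zv')$, together with the chain rule for KL in that generality, which holds under the natural absolute-continuity conditions — in particular $\mu^{(t)}\ll\pi$, so that $r\ll s$; if that fails, both KL divergences to $\pi$ are $+\infty$ and the inequality is trivial. For discrete $\zv$ everything collapses to sums and the argument is elementary. I would close by noting that iterating this one-step bound over $t=0,1,\dots,T-1$ along the SGLD chain of \eqref{eq:sgld} gives the monotone non-increase of $\mbox{KL}(\tilde q^{(t)}_{\eta,\phi}(\zv)\,||\,p(\zv\given\xv))$ asserted in the text, and hence the extrapolation property of the short Markov chain.
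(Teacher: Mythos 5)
Your proof is correct and is precisely the argument the paper relies on: the lemma is cited from Cover and Thomas (page 81) rather than reproved, and your two-way chain-rule expansion of $\mbox{KL}(r\,\|\,s)$ for the joint of consecutive states, using stationarity of $\pi$ and nonnegativity of the backward conditional KL term, is the standard proof given there. It is also exactly the decomposition the paper itself deploys in its proof of Lemma \ref{thm:gibbs}, so nothing further is needed.
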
\vspace{-1mm}
Specifically, Lemma \ref{lemma:SG-MCMC} sheds light on SGLD's continuously refined $\tilde q_{\eta,\phi}^{(t)}$ in terms of its KL divergence from the stationary distribution, for not only $t\leq T$ in MIVI, but also $t>T$ in extrapolation as long as the step sizes appropriately anneal. The guaranteed superiority of $\tilde{q}_{\eta,\phi}^{(t)}$ over $q_{\phi}$, however, is not enough; running a finite number of transitions, people also seek a balance of fast convergence (low variance) and small discretization errors (low bias) by a good selection of step sizes which may need to be tuned labor-intensively. To this end, 
MIVI incorporates $T$ transitions of SGLD, sets $\eta$ of $h_{\eta}$ as step sizes $\{\eta_1,\ldots,\eta_T\}$, and optimizes $\eta$ by \eqref{eq:obj_lrbnd} for a good bias-variance tradeoff. Moreover, with $q_{\phi}$ of MIVI locating high posterior density regions, the $T$ transitions start from the second phase of SGLD and the optimized step sizes can be leveraged to extrapolate the markov chain to any length $t$, $T<t<\infty$ for even better posterior estimations. 

{\bf Gibbs sampling~~~}
In addition to SGLD, Lemmas \ref{thm:gibbs} and \ref{cor:collapsed} show MIVI can utilize reparameterizable Gibbs sampling transitions to keep improving  ${\textstyle \tilde{q}^{(t)}_{\eta,\phi}(\zv)}$ as ${\textstyle t}$ increases and, more significantly, facilitate an efficient design of MCMC alternative to Gibbs sampling. Specifically, it is implied that for a Markov chain of variables ${\textstyle \zv}$ of interest and (auxiliary) variables ${\textstyle \wv}$, using a valid Gibbs sampling transition for ${\textstyle \zv}$ will keep pushing its marginal distribution closer to the posterior as long as the Markov chain's transition for ${\textstyle \wv}$ is good enough.
\begin{lem}\label{thm:gibbs}\vspace{-1mm}
Suppose the transition of a Markov chain $M$ of $(\wv, \zv)$ at  time $t+1$ is $r$ such that $r(\wv^ {(t+1)}, \zv^{(t+1)}\given \wv^{(t)},\zv^{(t)}) = r(\zv^{(t+1)} \given \wv^{(t)})r(\wv^{(t+1)}\given \zv^{(t+1)})$. Let $\mu^{(t)}$ be any distribution on the state space of $M$ at $t$ and $\mu^{(t+1)}$ be the marginal distribution after one transition from $\mu^{(t)}$. Let $q$ denote the joint mass/density function and thus $\textstyle q(\wv^{(t)},\zv^{(t)}, \wv^{(t+1)},\zv^{(t+1)}) = q(\wv^{(t)},\zv^{(t)})r( \wv^{(t+1)},\zv^{(t+1)}\given \wv^{(t)},\zv^{(t)}).$ If $r(\zv\given \wv)$ is the conditional distribution of $\zv$ given $\wv$ and hence a valid transition of $\zv$ in a Gibbs sampler $G$ that converges to the posterior $\pi(\wv,\zv)$, then
${\textstyle 
\mbox{KL}(q(\wv^{(t)},\zv^{(t)})\,||\, \pi(\wv,\zv))\geq \mbox{KL}(q(\wv^{(t)},\zv^{(t+1)})\,||\, \pi(\wv,\zv)).    
}
$
\end{lem}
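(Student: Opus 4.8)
The statement is the assertion that one exact Gibbs update of the $\zv$-block cannot increase the KL divergence to the target, and the plan is a chain-rule-of-KL argument on the augmented triple $(\wv^{(t)},\zv^{(t)},\zv^{(t+1)})$. Note first that the $\wv$-update $r(\wv^{(t+1)}\given\zv^{(t+1)})$ never enters: the inequality only involves $(\wv^{(t)},\zv^{(t)})$ and $(\wv^{(t)},\zv^{(t+1)})$. The only facts I need about $r$ are that $r(\zv^{(t+1)}\given\wv^{(t)})=\pi(\zv^{(t+1)}\given\wv^{(t)})$ and that consequently $\int\pi(\wv,\zv)\,\pi(\zv'\given\wv)\,d\zv=\pi(\wv)\,\pi(\zv'\given\wv)=\pi(\wv,\zv')$, i.e. the Gibbs half-step leaves $\pi$ invariant. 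A one-line alternative is then to invoke Lemma~\ref{lemma:SG-MCMC} directly for the kernel ``keep $\wv^{(t)}$, redraw $\zv^{(t+1)}\sim\pi(\cdot\given\wv^{(t)})$'' with $\mu^{(t)}=q(\wv^{(t)},\zv^{(t)})$ and $\mu^{(t+1)}=q(\wv^{(t)},\zv^{(t+1)})$; I would give the self-contained argument below for completeness.

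Next I would introduce the auxiliary joint $\bar\pi(\wv^{(t)},\zv^{(t)},\zv^{(t+1)}):=\pi(\wv^{(t)},\zv^{(t)})\,\pi(\zv^{(t+1)}\given\wv^{(t)})$ and verify three bookkeeping facts: (i) from the prescribed form of $r$, the $q$-side triple factors as $q(\wv^{(t)},\zv^{(t)},\zv^{(t+1)})=q(\wv^{(t)},\zv^{(t)})\,\pi(\zv^{(t+1)}\given\wv^{(t)})$; (ii) marginalizing $\bar\pi$ over $\zv^{(t+1)}$ gives $\pi(\wv,\zv)$, and marginalizing it over $\zv^{(t)}$ also gives $\pi(\wv,\zv)$ (up to relabeling the $\zv$-coordinate); (iii) $q(\zv^{(t+1)}\given\wv^{(t)},\zv^{(t)})=\bar\pi(\zv^{(t+1)}\given\wv^{(t)},\zv^{(t)})=\pi(\zv^{(t+1)}\given\wv^{(t)})$.

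Then I would expand $\mbox{KL}(q(\wv^{(t)},\zv^{(t)},\zv^{(t+1)})\,||\,\bar\pi)$ with the KL chain rule in two different groupings. Splitting off $(\wv^{(t)},\zv^{(t)})$ first, the residual expected conditional-KL term vanishes by (iii), so the triple-KL equals $\mbox{KL}(q(\wv^{(t)},\zv^{(t)})\,||\,\pi(\wv,\zv))$ by (ii). Splitting off $(\wv^{(t)},\zv^{(t+1)})$ first, the residual term is nonnegative, so the triple-KL is at least $\mbox{KL}(q(\wv^{(t)},\zv^{(t+1)})\,||\,\pi(\wv,\zv))$, again by (ii). Equating the two expansions yields the claimed inequality, with finiteness of the left-hand KL as the implicit regularity assumption.

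The step that needs the most care is pinning down $\bar\pi$: it must be a legitimate density whose \emph{two} relevant bivariate marginals both coincide with the target $\pi(\wv,\zv)$ while its conditional for $\zv^{(t+1)}$ reproduces the Gibbs transition exactly — this dual role is precisely what makes the two chain-rule decompositions terminate at the two quantities appearing in the statement. After that, the argument uses only nonnegativity of KL plus the chain rule, and the sole conceptual point to flag is that the $\wv$-transition is irrelevant here; it matters only when this bound is later chained with the analogous bound for the $\wv$-update.
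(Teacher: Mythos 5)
Your proposal is correct and follows essentially the same route as the paper's proof: both apply the KL chain rule to the triple $(\wv^{(t)},\zv^{(t)},\zv^{(t+1)})$ in two groupings, using that the conditional-KL term vanishes when $(\wv^{(t)},\zv^{(t)})$ is split off first (since both joints share the transition $r(\zv^{(t+1)}\given\wv^{(t)})=\pi(\zv^{(t+1)}\given\wv^{(t)})$) and is nonnegative when $(\wv^{(t)},\zv^{(t+1)})$ is split off first, then identifying both bivariate marginals of the reference joint with $\pi(\wv,\zv)$. The only cosmetic difference is that the paper first proves the inequality for an arbitrary reference distribution $q'$ evolved by the same kernel and specializes to $q'=\pi$ at the end, whereas you define the stationary reference joint $\bar\pi$ directly.
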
\vspace{-1mm}
\begin{lem}\label{cor:collapsed}\vspace{-1mm}
With all the assumptions in Lemma \ref{thm:gibbs}, suppose $\mu'^{(t)}$ is the posterior with density $\pi(\wv,\zv)$ and is at time $t$ of $M$. $q'$ denotes the joint mass/density of variables from $\mu'^{(t)}$ and $\mu'^{(t+1)}$. 
If $r(\wv \given \zv)$ at time $t$ is close to $\pi(\wv|\zv)$ in the sense that  $\mathbb{E}_{q(\zv^{(t)})} \left[ \mbox{KL}(r(\wv\given \zv^{(t)}) \,||\, \pi(\wv\given \zv^{(t)})) \right] \leq \mathbb{E}_{q(\zv^{(t+1)})}\left[\mbox{KL}(q(\wv^{(t)}\given \zv^{(t+1)}) \,||\, q'(\wv^{(t)}\given \zv^{(t+1)}))\right]$, then
$ 
{\textstyle 
 \mbox{KL}(q( \zv^{(t)})\,||\, \pi(\zv)) \geq \mbox{KL}(q(\zv^{(t+1)})\,||\, \pi(\zv)).
} 
$ 
\end{lem}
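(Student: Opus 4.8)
The plan is to derive the claimed monotonicity from Lemma~\ref{thm:gibbs} together with two applications of the chain rule for the KL divergence. Lemma~\ref{thm:gibbs} already gives $\mbox{KL}(q(\wv^{(t)},\zv^{(t)})\,||\,\pi(\wv,\zv))\geq\mbox{KL}(q(\wv^{(t)},\zv^{(t+1)})\,||\,\pi(\wv,\zv))$, so the idea is to expand both of these joint divergences, conditioning on the $\zv$-coordinate, so that the marginal divergences $\mbox{KL}(q(\zv^{(t)})\,||\,\pi(\zv))$ and $\mbox{KL}(q(\zv^{(t+1)})\,||\,\pi(\zv))$ appear explicitly, and the leftover conditional-KL terms are exactly what the hypothesis controls.

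Concretely, I would first write
\[
\mbox{KL}(q(\wv^{(t)},\zv^{(t)})\,||\,\pi(\wv,\zv)) = \mbox{KL}(q(\zv^{(t)})\,||\,\pi(\zv)) + \mathbb{E}_{q(\zv^{(t)})}\big[\mbox{KL}(q(\wv^{(t)}\given\zv^{(t)})\,||\,\pi(\wv\given\zv^{(t)}))\big],
\]
and the analogous identity with $\zv^{(t+1)}$ in place of $\zv^{(t)}$. Then I would pin down the relevant conditionals. Since the transition $r$ resamples $\wv$ using only the freshly drawn $\zv$ (the factor $r(\wv^{(t+1)}\given\zv^{(t+1)})$), the state at time $t$ satisfies $q(\wv^{(t)}\given\zv^{(t)})=r(\wv\given\zv^{(t)})$. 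Moreover, since the $\zv$-update is the exact Gibbs conditional $r(\zv\given\wv)=\pi(\zv\given\wv)$, the posterior $\pi(\wv,\zv)$ is invariant under it, so starting from $\mu'^{(t)}$ with density $\pi(\wv,\zv)$ keeps the pair $(\wv^{(t)},\zv^{(t+1)})$ distributed as $\pi$, giving $q'(\wv^{(t)}\given\zv^{(t+1)})=\pi(\wv\given\zv^{(t+1)})$. Substituting these two identities, the hypothesis of the lemma reads exactly
\[
\mathbb{E}_{q(\zv^{(t)})}\big[\mbox{KL}(q(\wv^{(t)}\given\zv^{(t)})\,||\,\pi(\wv\given\zv^{(t)}))\big] \leq \mathbb{E}_{q(\zv^{(t+1)})}\big[\mbox{KL}(q(\wv^{(t)}\given\zv^{(t+1)})\,||\,\pi(\wv\given\zv^{(t+1)}))\big].
\]

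Finally, I would plug the two chain-rule expansions into the inequality of Lemma~\ref{thm:gibbs} and cancel, obtaining
\[
\mbox{KL}(q(\zv^{(t)})\,||\,\pi(\zv)) - \mbox{KL}(q(\zv^{(t+1)})\,||\,\pi(\zv)) \geq \mathbb{E}_{q(\zv^{(t+1)})}\big[\mbox{KL}(q(\wv^{(t)}\given\zv^{(t+1)})\,||\,\pi(\wv\given\zv^{(t+1)}))\big] - \mathbb{E}_{q(\zv^{(t)})}\big[\mbox{KL}(q(\wv^{(t)}\given\zv^{(t)})\,||\,\pi(\wv\given\zv^{(t)}))\big].
\]
The right-hand side is precisely the difference that the rewritten hypothesis forces to be nonnegative, so $\mbox{KL}(q(\zv^{(t)})\,||\,\pi(\zv)) \geq \mbox{KL}(q(\zv^{(t+1)})\,||\,\pi(\zv))$, which is the claim.

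I expect the only delicate point to be the identification step: verifying $q'(\wv^{(t)}\given\zv^{(t+1)})=\pi(\wv\given\zv^{(t+1)})$, which rests on the invariance of $\pi$ under the exact $\zv$-Gibbs update and on the factorized form of $r$, and verifying $q(\wv^{(t)}\given\zv^{(t)})=r(\wv\given\zv^{(t)})$, which rests on the Markov structure of the transition (and hence implicitly on at least one transition having occurred). Everything after that is a routine use of the KL chain rule and of the already-established Lemma~\ref{thm:gibbs}.
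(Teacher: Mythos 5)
Your proof is correct and follows essentially the same route as the paper's: both apply the chain rule of KL divergence (conditioning on the $\zv$-coordinate) to the joint-divergence inequality established in Lemma~\ref{thm:gibbs}, identify $q(\wv^{(t)}\given\zv^{(t)})=r(\wv\given\zv^{(t)})$ and $q'(\wv^{(t)},\zv^{(t+1)})=\pi(\wv,\zv)$, and cancel the conditional terms using the hypothesis. The only cosmetic difference is that you substitute $\pi$ for $q'$ at the start, whereas the paper carries $q'$ through and converts $q'(\zv^{(t+1)})$ to $\pi(\zv)$ at the very end.
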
\vspace{-1mm}
As shown by Lemma \ref{thm:gibbs} and \ref{cor:collapsed}, if $(\wv,\zv)$ are variables of interest and the full conditional distribution $p(\zv\given \wv,\xv)$ is reparameterizable, we can use $p(\zv\given \wv,\xv)$ as the transition of $\zv$ in the Markov chain. Furthermore, a (richly) parameterized transition function $h^{(1)}_{\eta}(\wv \given \zv,\xv)$ is learned by MIVI to well approximate the full conditional distribution $p(\wv\given \zv, \xv)$ so that, by Lemma \ref{cor:collapsed}, $\zv^{(t)}$ approaches to the true posterior $p(\zv\given \xv)$ as $t$ increases, not only within the $T$ transitions of MIVI, but also for $t>T$ when the extrapolated chain serves as an MCMC scheme. This is especially useful when we care about posterior estimates of $\zv$ more than $\wv$. Moreover, iterating $p(\wv \given \zv, \xv)$ and $h^{(1)}_\eta(\wv\given \zv,\xv)$ is an efficient MCMC scheme with fast mixing because the optimized  $h^{(1)}_\eta(\wv\given \zv,\xv)$ by MIVI has located high density regions of $p(\wv\given \zv, \xv)$.
We provide in Section \ref{sec:experiment} specific applications where $\wv$ are auxiliary variables enabling a closed-form reparameterizable full conditional distribution of $\zv$. 

\subsection{MIVI implementation}\vspace{-2mm}
Instead of keeping the discriminator $D$ and ${\phi}$ optimal in every epoch when optimizing $\theta$ and $\eta$, we regard the problem as a three-player game analogous to the two-player game of \citet{mescheder2017adversarial} in order to reduce the computing cost:
1) Given $D$ and $\phi$, we optimize $\eta$ and $\theta$ to maximize ELBO  \eqref{eq:obj_lrbnd}. 
2) Given $\eta$,  we optimize $\phi$  to reduce the discrepancy between $q_{\phi}$ and $\tilde{q}_{\eta,\phi}$ measured by the cross entropy \eqref{eq:xentropy}. 
3) The discriminator $D$ tries to differentiate samples from $\tilde{q}_{\eta,\phi}$ and $q_{\phi}$. Note that $\eta$ and $\phi$ are learned adversarially and the game terminates at a saddle point that is a maximum of \eqref{eq:obj_lrbnd} with respect to $\eta$'s strategy and a minimum of \eqref{eq:xentropy} with respect to $\phi$'s strategy. The ELBO of MIVI, $\mathbb{E}_{\tilde{q}_{\eta,\phi}(\zv)}\log\frac{p_{\theta}(\xv,\zv)}{\tilde q_{\eta,\phi}(\zv)} $, is bounded above as in Property \ref{prop:bound}. 
\begin{property}\label{prop:bound}  \vspace{0mm}
${\textstyle  \mathbb{E}_{\tilde{q}_{\eta,\phi}(\zv)}\log\frac{p_{\theta}(\xv,\zv)}{\tilde q_{\eta,\phi}(\zv)}
\leq \mathbb{E}_{\tilde{q}_{\eta,\phi}(\zv)}\log\frac{p_{\theta}(\xv,\zv)}{q_{\phi}(\zv)}}.$ 
\end{property}\vspace{-2mm}

The upper bound 
together with saturation of $\sigma(D)$ provides a fast pre-training strategy. Concretely, given $\phi^*$ and $D^*$ we assume $\sigma(D^*)$ saturates such that $\mathbb{E}_{\tilde{q}_{\eta,\phi^*}(\zv)} D^*(\zv) \leq c$  for some positive constant $c$ (that may depend on $\eta$ and $\phi^*$). Consequently, \eqref{eq:obj_lrbnd} is bounded between ${\textstyle ( \mathbb{E}_{\tilde{q}_{\eta,\phi^*}(\zv)}\log\frac{p_{\theta}(\xv,\zv)}{q_{\phi^*}(\zv)}-c)}$ and ${\textstyle \mathbb{E}_{\tilde{q}_{\eta,\phi^*}(\zv)}\log\frac{p_{\theta}(\xv,\zv)}{q_{\phi^*}(\zv)}}$ which, instead, can be optimized to avoid potentially the most time-consuming training of $D$. We summarize the implementation of MIVI as Algorithm \ref{algorithm} in Appendix. We find that MIVI is numerically stable and converges 
fast as shown in Section \ref{sec:experiment}.

\section{Related work and contribution}\label{sec:relatedwork}\vspace{-1.5mm}
Using a discriminator to approximate a hard-to-compute KL divergence was first introduced by \citet{mescheder2017adversarial} that enable an arbitrarily flexible variational distribution. It is also adopted by \citet{li2017approximate} where the variational posterior is supervised by SG-MCMC. But in their training procedure
the discriminator and variational parameters are entangled in a way that makes it 
difficult to rigorously calculate the gradient of the objective function. By contrast, 
we reformulate the objective with auxiliary optimization problems and provide rigorously derived gradients. 
Learning step sizes of SGLD by VI has been explored by \citet{gallego2019variationally} and \citet{nijkamp2020learning}. The former utilizes the Gaussianity of SGLD transitions, and the latter regard SGLD as a normalizing flow that assumes a volume-preserving invertible transformation. Both methods depend on the good properties of SGLD. Comparatively, the reformulated optimizations of MIVI make it well adapted to different kinds of MCMCs with reparameterizable transitions, so that many SG-MCMCs, like Hamiltonian and Langevin dynamics, and Gibbs sampling schemes can be incorporated.

While VI and MCMC have complementary properties, existing works combining the two have primarily studied one-way improvement. As for utilizing MCMC to facilitate VI,  
a common practice is using the refined MCMC marginal distribution to guide and improve the variational distribution. \citet{ruiz2019contrastive} minimize the discrepancy between the variational and a marginal distribution of Hamilton Monte Carlo (HMC) using the contrastive divergence without explicitly computing the KL divergence.  \citet{titsias2017learning} implicitly augments the variational distribution by MCMC and a model-based reparameterization. \citet{salimans2015markov} incorporate in VI finite steps of MCMC and the MCMC samples are inferred as auxiliary variables; HMC is adopted to illustrate this idea and is related to normalizing flow. Generally, \citet{pmlr-v37-rezende15} write Hamiltonian and Langevin dynamics as infinitesimal flows; both flows can be used in VI  for a tighter ELBO and the inference requires volume-preserving invertible transformations. \citet{zhang2018ergodic} construct measure preserving flows and utilize distribution preservation 
of Hamilton Monte Carlo. \citet{chen2017continuous} propose the use of Langevin dynamics as a way to transit from one latent variable to the next to improve 
variational autoencoders (VAEs). 

On the other hand, research of using VI to facilitate MCMC 
includes \citet{variationalmcmc2001} that use a variational distribution as the MH proposal to alleviate the poor scaling with dimension of the independent Metropolis algorithm.   
\citet{habib2018auxiliary} learn a lower-dimensional embedding of the parameters of interest by VI 
to accelerate MCMC mixing. Several works share the idea  of providing MCMC proposals with more flexibility by introducing auxiliary variables \citep{maddison2017filtering,naesseth2018variational,anh2018autoencoding}. 
In comparison, we fulfill mutual improvement of VI and MCMC by MIVI. Being a marginal distribution of valid MCMCs, the variational distribution of MIVI gets closer to the posterior. MIVI replaces unfavorable parts (like unknown, non-reparameterizable or manually tuned transitions, see Section \ref{sec:experiment}) of MCMCs by (richly) parameterized functions and learns them in the framework of VI. In this way, MIVI facilitates designs of MCMCs. More importantly, with theoretical support, the chain in VI can be extrapolated and used as an efficient alternative to well established MCMCs. In addition, to the best of our knowledge, MIVI is the first method to combine VI and Gibbs sampling.

\section{Experiments}\label{sec:experiment}\vspace{-1.5mm}
We first use toy data (deferred to Appendix) 
and a negative binomial (NB) model to illustrate the flexibility  of MIVI incorprating a few SGLD transitions. 
Next, we use both Bayesian logistic and bridge regression to show MIVI and Gibbs sampling facilitate each other when some of the Gibbs sampling transitions are unknown or not reparameterizable. In addition, we provide experiments of variational autoencoders (VAEs)  \citep{kingma2013auto} by MIVI and demonstrate its remarkable performance compared to existing state-of-the-art algorithms. We use \textit{Adam} \citep{kingma2014adam} to otpimize $\theta$, $\phi$, $\eta$ and $D$ with the learning rate as $0.001$. Throughout this section unless specified, the prior $p(\zv)$ used in Gibbs sampling, mean-field VI (MFVI), and MIVI is $\mathcal{N}(0,I)$ for real-valued $\zv$ and the variational distribution $q_{\phi}(\zv)$ is a diagonal Gaussian whose mean and log of variances constitute $\phi$. We set $\eta$ as the step sizes of SGLD if incorporated in MIVI, and for simplicity, a time-invariant step size is set and learned. The learned step size can initialize an appropriate decay, like the one suggested by \citet{teh2016consistency}.  Also see \citet{vollmer2016exploration} for theoretical analysis of SGLD with a fixed step size.
More experiment settings are deferred to Appendix.

\subsection{Negative binomial model}\label{sec:NB}\vspace{-1mm}
We draw 1,000 random samples from negative binomial (NB) distribution $\mbox{NB}(x \given r=2, p=0.7)$ whose probability $p(x)=\frac{\Gamma(x+r)}{\Gamma(r) x!}p^x(1-p)^r$ for $x\in {0}\cup\mathbb{Z_+}$.  We use $r\sim \mbox{Gamma}(0.1,0.1)$ and $p\sim\mbox{Beta}(0.1,0.1)$ as the prior. Posteriors of $r$ and $p$ under the NB model are estimated by  Gibbs sampling \citep{zhou2012lognormal}, MFVI, and MIVI. We set $z=(\log(r), \mbox{logit}(p))$ in MFVI and MIVI that incorporates $T=10$ SGLD transitions.
Shown in Figure \ref{fig:NBLogit} (a) are the estimated density contour plots of $(r,p)$ by Gibbs sampling and the one transformed from  $(\log(r),\mbox{logit}(p))\sim q_{\phi}$ by MFVI. Analogously plotted in Figure \ref{fig:NBLogit} (b) are the densities of $(r,p)$ resulting from $q_{\phi}$ and $\tilde{q}_{\eta,\phi}$ by MIVI. The negative correlation in the posterior of $r$ and $p$ as shown by Gibbs sampling has been well recovered by $\tilde{q}_{\eta,\phi}$ in MIVI. Furthermore, the diagonal Gaussian $q_{\phi}$ in MFVI has underestimated the parameter uncertainty whereas $q_{\phi}$ of the same family in MIVI gives much better variance estimations because 
MIVI restrains the discrepancy between $q_{\phi}$ and $\tilde{q}_{\eta,\phi}$.

\begin{figure}[!t]\vspace{-3mm}
 \centering
 \begin{subfigure}[t]{0.245\textwidth}
 \centering
\includegraphics[width=1\linewidth]{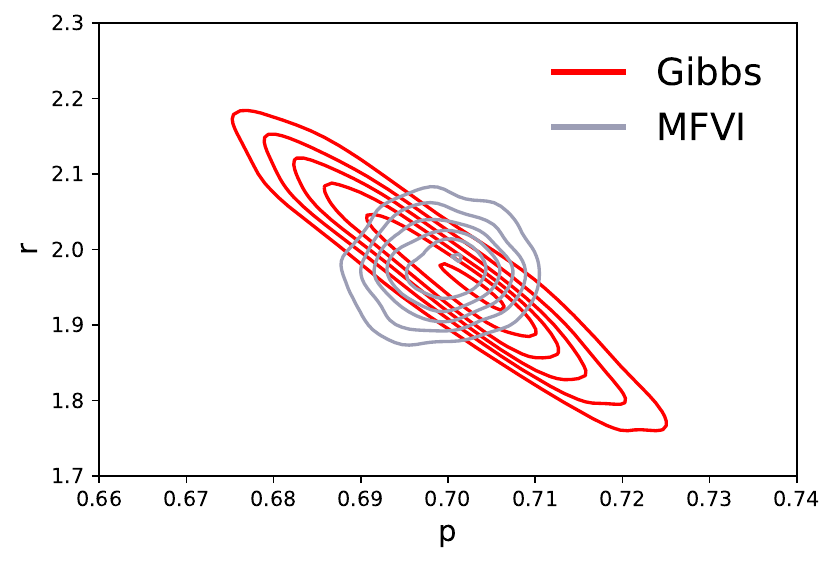}\vspace{-2.5mm} 
 \caption{\small NB: Gibbs and MFVI.}\vspace{-1mm}
 \end{subfigure}%
 \begin{subfigure}[t]{0.245\textwidth}
 \centering
\includegraphics[width=1\linewidth]{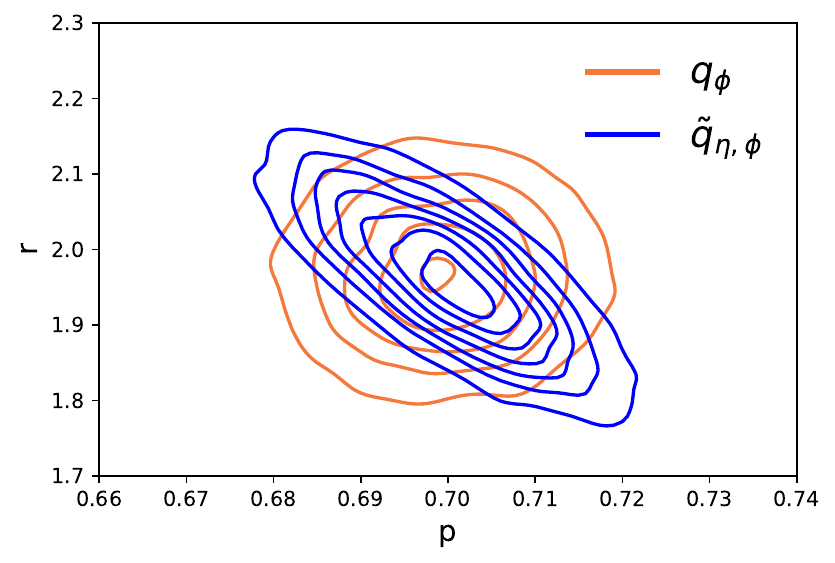}\vspace{-2.5mm} 
 \caption{\small NB: MIVI. 
 		 }\vspace{-1mm}
 \end{subfigure}%
\begin{subfigure}[t]{0.245\textwidth}
 \centering
\includegraphics[width=1\linewidth]{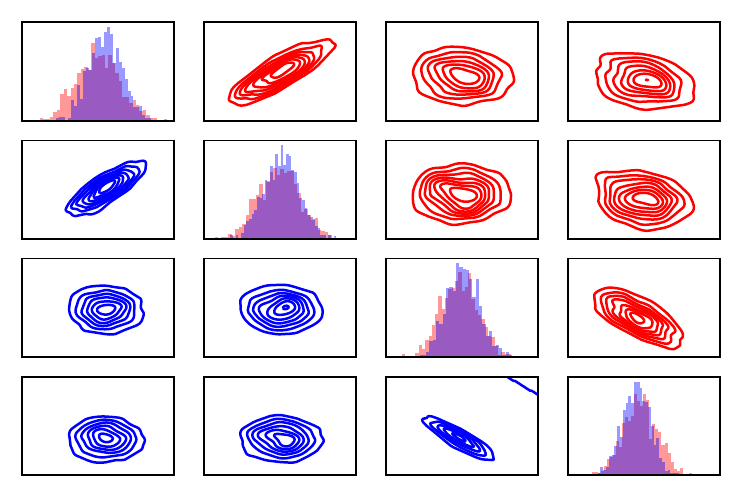}\vspace{-2.5mm} 
 \caption{\small Logistic: $\beta$.}\vspace{-1mm}
 \end{subfigure}%
 \begin{subfigure}[t]{0.245\textwidth}
 \centering
\includegraphics[width=1\linewidth]{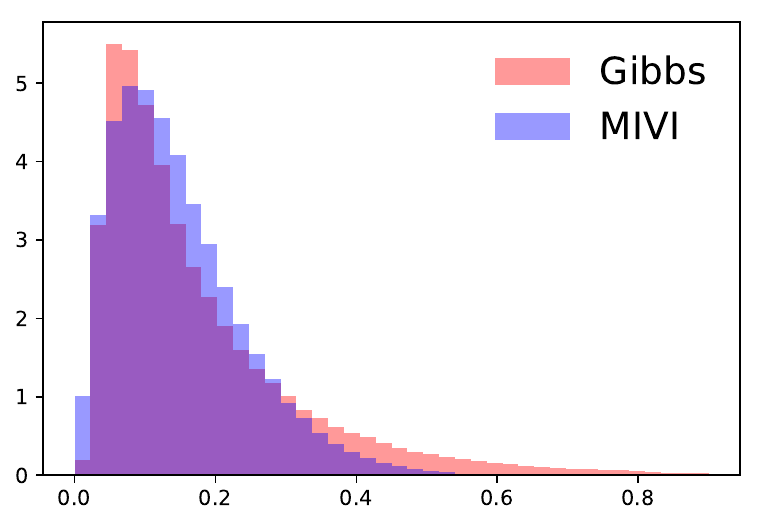}\vspace{-2.5mm} 
 \caption{\small Logistic: $\omega$.}\vspace{-1mm}
 \end{subfigure}
\caption{\small Estimated posterior densities. (a) and (b) are the estimated posteriors of $r$ and $p$ for the negative binomial model by Gibbs sampling (red), MFVI (gray) and MIVI (orange for $q_{\phi}$ and blue for $\tilde{q}_{\eta,\phi}$), respectively. (c) and (d) are the estimated posteriors of the logistic regression coefficient $\beta$ and the auxiliary Polya gamma random variable $\omega$ by Gibbs sampling (red) and $\tilde{q}_{\eta,\phi}$ of MIVI (blue).
}\label{fig:NBLogit}\vspace{-6mm}
\end{figure}\vspace{-2pt}

Next, we accentuate MIVI that uses Gibbs sampling transitions for variables $\zv$ and replaces unknown or non-reparameterizable Gibbs transitions for variables $\wv$ by (richly)  parameterized functions. Compared to SGLD, MIVI needs fewer Gibbs-sampling-like transitions without sacrificing capacity, and the inferred $\tilde q_{\eta,\phi}$ gives comparable posterior estimates to Gibbs sampling.
Examples include Bayesian logistic and bridge regression  using auxiliary variables that are difficult to find or sample.
\vspace{-2mm}
\subsection{Bayesian logistic regression}\label{sec:logistic}\vspace{-1mm}
One of the most well-known data augmentation schemes is the Polya gamma (PG) augmentation for logistic regression \citep{polson2013bayesian}, making the regression coefficients have Gaussian conditional distributions. 
Specifically, given a unique $\xv_i$, $i=1,\cdots,n$, and $y_i \sim \mbox{Bernoulli}( \sigma (\xv_i' {\beta}))$, 
$p(y_i\given x_i,\beta) = \frac{e^{y_i \xv_i'\beta}}{1+e^{\xv_i'\beta}}=\frac{e^{(y_i-\frac{1}{2})\xv_i'\beta}}{2}\int_{0}^\infty e^{- \omega_i(\xv_i'\beta)^2/2}p(\omega_i)d\omega_i$
where $p(\omega_i)$ is the density of $\mbox{PG}(1,0)$ prior on $\omega_i$. The conditional posterior of $\omega_i$ is $\mbox{PG}(1,\xv_i'\beta)$ and that of $\beta$ is a Gaussian distribution. Iterating the samplings from both distributions defines a valid Gibbs sampler 
(see Appendix for  details). However, PG distributions are not reparameterizable. Therefore, in the Markov chain of MIVI we use the Gaussian full conditional distribution as the transition for $\beta$ and a neural network $g_{\eta}$ parameterized by $\eta$ for local variables $\omega_i$'s. Specifically, concatenating $\xv_i'\beta$ and an independent Gaussian random vector $\epsilon_i$ as the input of $g_\eta$, the Markov chain in MIVI proceeds by iterating $\omega_i = g_\eta(\xv_i'\beta, \epsilon_i)$ and $(\beta \given -)\sim \mathcal{N}\left(\Sigma_{\beta}(X'\kappa + I), \Sigma_{ \beta}  \right)$,
where $\Sigma_{ \beta}  = (X'\Omega  X  +I)^{-1}$, $\Omega=\mbox{diag}(\omega_1,\ldots,\omega_n)$, and $\kappa = (y_1-0.5,\cdots,y_n-0.5)$. 

We synthesize a data set of 1,000 four-dimenstional, correlated $\xv_i\sim \mathcal{N}(0,\Sigma)$ where the elements of $\Sigma$ are $\sigma_{v,v} = 1$, $v=1,2,3,4$, $\sigma_{1,2}=\sigma_{2,1}=-0.8$, $\sigma_{3,4}=\sigma_{4,3}=0.9$ and other $\sigma_{v,v'}=0$. True $\beta = (\beta_1, \beta_2,\beta_3,\beta_4)$ is set to be $(-2,-1,1,2)$ and $y_i\sim \mbox{Bernoulli}( \sigma(\xv_i'\beta))$. Good estimations of $\beta_{1}$ and $\beta_2$ should be positively correlated and $\beta_{3}$ and $\beta_4$ negatively correlated. We run only one transition of the Markov chain in MIVI (i.e., $T=1$) and compare $\tilde{q}_{\eta,\phi}$ with Gibbs sampling. Shown in Figure~\ref{fig:NBLogit} (c) and (d), respectively, are the estimated posterior of $\beta$ and of $\omega$ averaged over data which is $\int p(\omega\given \xv)p(\xv)d\xv$. As a result, $\tilde{q}_{\eta,\phi}$ of MIVI gives rise to comparable posterior estimations to Gibbs sampling. We plot in Appendix the estimated posterior $\omega_i$ for some randomly selected $i$ which are also  similar to those from Gibbs sampling. Additionally, logistic regression of binary MNIST (3 v.s. 5) by MIVI achieves a testing accuracy of $95.79\%$ that matches $95.64\%$ from the MLE of a well-tuned $L_2$-penalized logistic regression. Also provided in Appendix are estimated distributions of $\omega_i$ from $\tilde{q}_{\eta,\phi}$ associated to randomly selected MNIST training images. Therefore, having well approximated the non-reparameterizable Gibbs sampling transition of $\omega_i$'s by a neural network, MIVI delivers posterior estimations on par with Gibbs sampling and preserves the classification capacity.

\subsection{Bayesian bridge regression}\label{sec:bridge}
\begin{figure}[!t]\vspace{-3mm}
 \centering
  \begin{subfigure}[t]{0.28\textwidth}
 \centering
\includegraphics[width=1\linewidth]{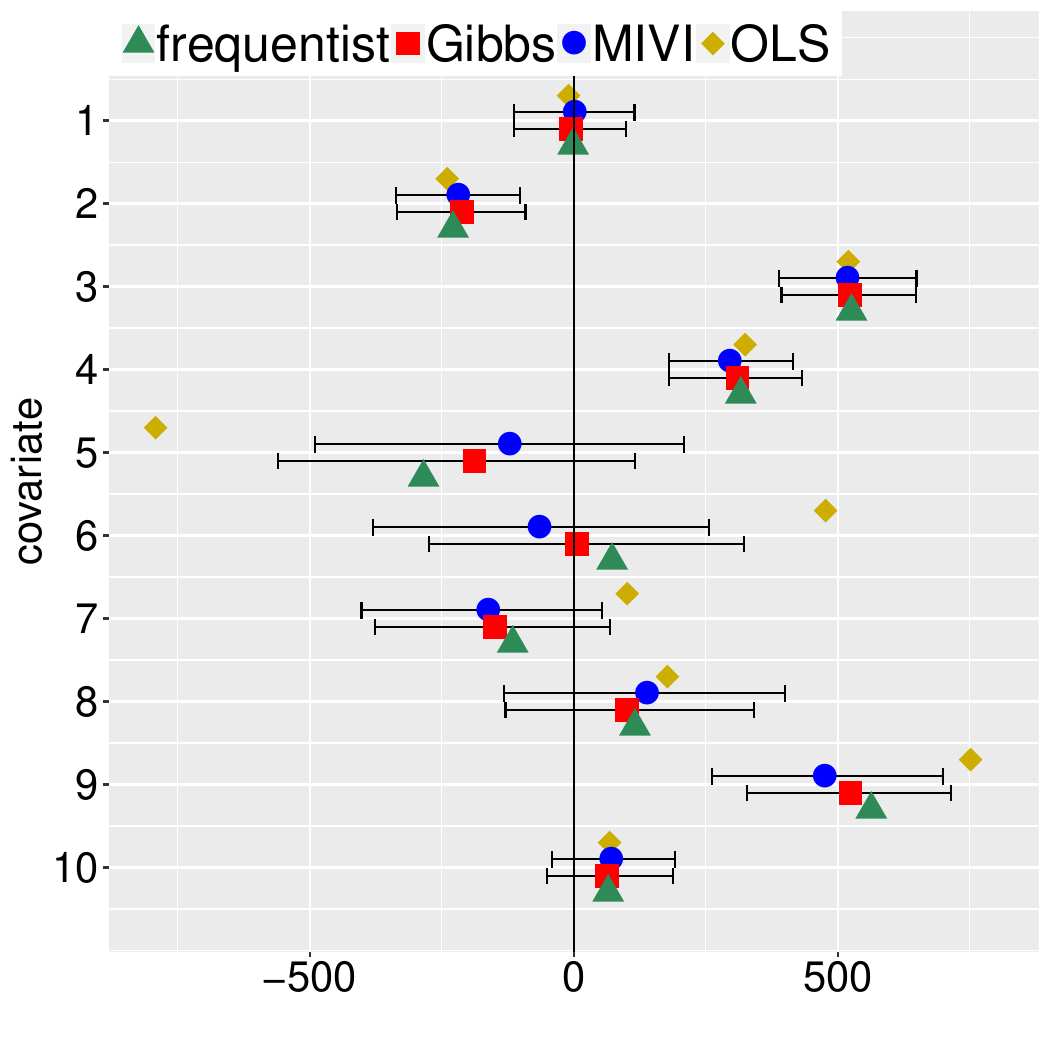}\vspace{-3mm} 
 \caption{\small $\alpha=1$ (Lasso).}\vspace{-0.5mm}
 \end{subfigure}%
 \begin{subfigure}[t]{0.28\textwidth}
 \centering
\includegraphics[width=1\linewidth]{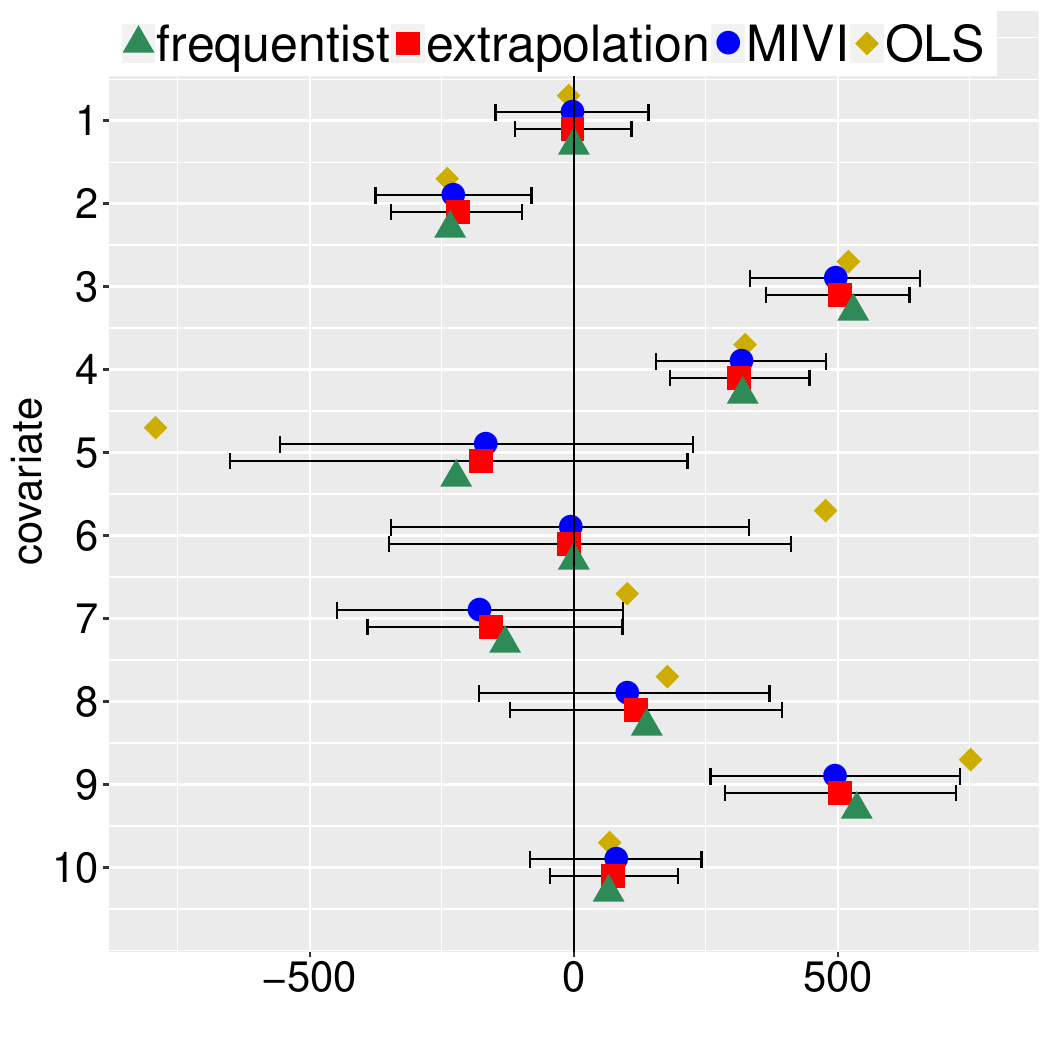}\vspace{-3mm} 
 \caption{\small $\alpha=0.5$.}\vspace{-0.5mm}
 \end{subfigure}%
 \begin{subfigure}[t]{0.28\textwidth}
 \centering
\includegraphics[width=1\linewidth]{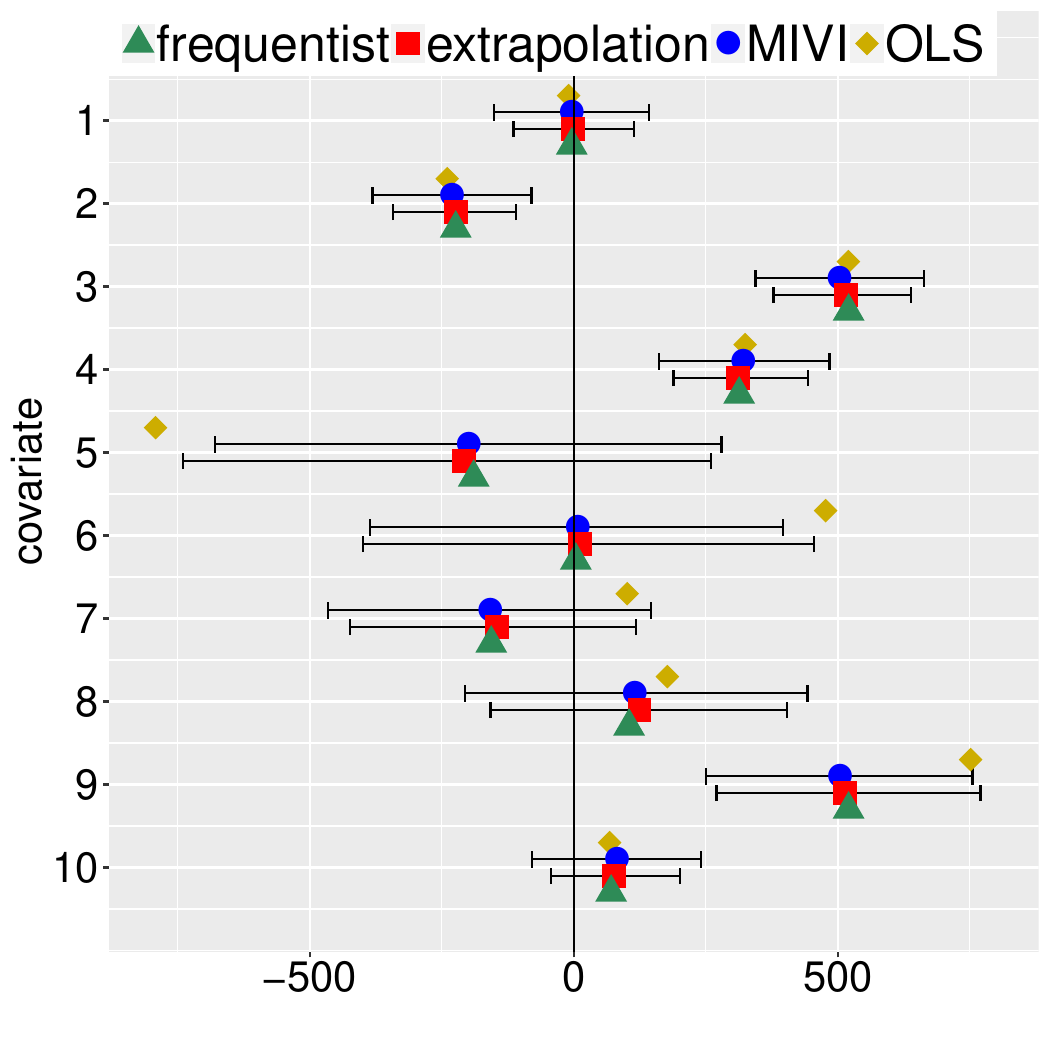}\vspace{-3mm} 
 \caption{\small $\alpha=1.5$.}\vspace{-0.5mm}
 \end{subfigure}%
 \vspace{-2mm}
\caption{\small Bridge regression of \texttt{diabetes} data. (a) is results of $\alpha=1$ (Lasso), (b) $\alpha=0.5$ and (c) $\alpha=1.5$, including point estimates of $\beta$ by a frequentist approach (green triangle) minimizing the loss function, Gibbs sampling or the extrapolated Markov chain (red square), $\tilde q_{\eta,\phi}$ of MIVI (blue dot) and OLS (yellow diamond) and the $95\%$ CIs by Gibbs (or the extrapolated chain) and MIVI.
}\label{fig:bridge}\vspace{-3mm}
\end{figure}

Next we show MIVI not only well approximates posteriors but also helps to design valid Gibbs-sampling-like MCMC when some Gibbs sampler transitions are unknown in analytic expressions.
Bridge regression tries to find $\hat\beta=(\hat\beta_1,\ldots,\hat\beta_p)$ that minimizes
$\frac{1}{2}||y-X\beta||^2 + \psi \sum_{v=1}^p |\beta_v|^a$
given the choice of $\alpha\in(0,2)$ and $\psi>0$. From a Bayesian perspective, a hirarchical model for bridge regression is $p(y \given X, \beta,\sigma) = \mathcal{N}(y\given X\beta, \sigma^2 I)$, $ p(1/\sigma^2) = \mbox{Gamma}(1/\sigma^2\given r,c)$, and $p(\beta_v \given \alpha,\rho,\sigma) \propto e^{-\rho |\beta_v/\sigma|^\alpha}$ for $v=1,\ldots,p$, 
where $r$ and $c$ are the gamma shape and rate parameter, respectively, and $\rho$ is a hyper-parameter regularizing the $L_\alpha$ norm of $\beta$. A data augmentation that writes $p(\beta_v \given \alpha,\rho,\sigma)$ as a scale mixture of normals enables conjugacy. Specifically, ${\textstyle e^{-\rho |\beta_v/\sigma|^\alpha} = \int_0^\infty e^{-\frac{\lambda_v \beta_v^2}{2\sigma^2}\rho^{2/\alpha}} g(\lambda_v)d\lambda_v \label{eq:bridge} }$, 
where $g(\lambda_v)$ is proportional to the density of a positive stable distribution with index of stability ${\alpha}/{2}$ \citep{west1987scale,polson2014bayesian}. While both the prior and  full conditional of $\beta$ are Gaussian, neither the posterior nor the full conditional distribution of $\lambda_v$ is known in closed form, which impedes an efficient Gibbs sampler under this data augmentation. 

To circumvent the unknown conditional distribution of global variables $\lambda_v$'s, we use a flexible reparameterizable distribution to approximate their marginal distribution which serves as a time-invariant transition of $\lambda_v$'s in the Markov chain of MIVI. For simplicity, we adopt Weibull distributions as $\lambda_v\sim\mbox{Weibull}(a_v,b_v)$, which is equivalent to $\lambda_v = a_ve^{\log(-\log u)/b_v}$, $u\sim \mbox{Uniform}(0,1)$, but other flexible distributions on $\mathbb{R}_+$, like a neural network with random noise as input, also work as long as they are reparameterizable. Given $\lambda_v$'s, $\beta$ and $\sigma^2$ are updated according to their full conditional distributions. Concretely, the Markov chain of MIVI proceeds by iterating
\begin{align}
&{\textstyle (\lambda_v\given -) \sim \mbox{Weibull}(a_v, b_v),~ v=1,\ldots,p,~~~~~~
(\beta\given -)   \sim \mathcal{N} (\Sigma X'y, \sigma^2\Sigma ),\nonumber}\\
&(1/\sigma^2\given-) \sim \textstyle \mbox{Gamma} (r+\frac{n+p}{2}, c+\frac{1}{2}||y-X\beta||^2+\frac{1}{2}\sum_{v=1}^p \rho^{2/\alpha}\lambda_v\beta_v^2 ),\label{eq:bridge_gibbs}
\end{align}
where $\Sigma=(X'X+\rho^{2/\alpha}\Lambda)^{-1}$, $\Lambda=\mbox{diag}(\lambda_1,\cdots,\lambda_p)$, and $n$ is the number of observations. 
With $\eta=(\log a_1,\log b_1,\ldots,\log a_p,\log b_p)$ optimized by MIVI, the Markov chain can be extrapolated to approximate a collapsed Gibbs sampler whose transition of $\lambda_v$'s are their marginal distributions.
Note that bridge regression is reduced to Lasso if $\alpha=1$ and a Gibbs sampler is feasible by imposing a Laplacian prior on $\beta$ \citep{park2008bayesian}. When $\alpha\neq 1$, \citet{polson2014bayesian} has proposed a Gibbs sampler that requires truncated multivariate distributions for parameter updates, which may require inefficient rejection sampling. Additionally, the data augmentation by the positive stable distributed variables that results in full conditional distributions of $\beta$ and $\sigma^2$ in  \eqref{eq:bridge_gibbs} is different from the one in \citet{polson2014bayesian} and cannot be reduced to the one in \citet{park2008bayesian} when $\alpha=1$. 

With $\alpha=0.5$, $1$, and $1.5$ we showcase MIVI for bridge regression on \texttt{diabetes} data \citep{efron2004least} 
and the validity of the extrapolated Markov chain \eqref{eq:bridge_gibbs} as an MCMC scheme. Since choosing the hyper-parameter $\rho$ is outside our scope of research, for $\alpha=1$ we run the Gibbs sampling \citep{park2008bayesian} with the suggested value of $\rho$, followed by MIVI ($T=3$) and frequentist Lasso that approximately match the $L_1$ norm of $\beta$. For $\alpha=0.5$ and $1.5$, we first use 4-fold cross-validation to select the value of the hyper-parameter, and then run MIVI ($T=3$) with $\rho$ chosen to match the $L_\alpha$ norm. In addition, we use the optimized Weibull distribution to extrapolate the Markov chain from random initial values.
In Figure \ref{fig:bridge} we provide the point estimates of $\beta$ resulting from MIVI, Gibbs sampling ($\alpha=1$) or the extrapolated Markov chain \eqref{eq:bridge_gibbs} ($\alpha=0.5$ and $1.5$) where MCMC samples from the last 1,000 of a total of 5,000 iterations are collected for inference, and the frequentist bridge regression along with ordinary least squares (OLS). Also reported are the $95\%$ confidence intervals (CIs) by MIVI and the Gibbs sampling or the extrapolated chain. 
While estimation of $\beta$ by $\tilde q_{\eta,\phi}$ of MIVI is not sparse in the exact sense, the point estimates (and CIs) coincide with those by frequentist Lasso (and Gibbs sampling for $\alpha=1$). Moreover, for $\alpha=0.5$ and $1.5$, frequentist estimates lie around the center of the CIs by MIVI and the extrapolated chains. For $\alpha=1$ we also run an extrapolated chain of MIVI and the CIs are similar to MIVI. Together with Sections \ref{sec:logistic}, the results endorse MIVI as an alternative to Gibbs sampling but in a way of simplicity and high efficiency. 
\vspace{-2mm}
\subsection{Variational autoencoders}\label{sec:vae}\vspace{-1mm}

We consider MIVI of latent variables in VAEs on two data sets. One is stochastically binarized MNIST \citep{salakhutdinov2008quantitative} consisting of 50,000 training and 10,000 testing images of hand written digits. The other is fashion MNIST (fMNIST) \citep{xiao2017fashion} consisting of 60,000 training and 10,000 testing images of clothing items, where the pixels are binarized at threshold $0.5$. The variational distribution $q_{\phi}(\zv\given \xv)$ of the latent code $\zv$ is diagonal Gaussian whose mean and log of variances are parameterized by two separate fully connected neural networks with two hidden layers of 200 units and ReLU activation functions. 
The same network structure is used for the Bernoulli probability of decoder $p_{\theta}(\xv\given \zv)$, except for a sigmoid transformation of the output. $T=5$ SGLD transitions are incorporated in MIVI, with $\eta$ as the parameter of a neural network whose input is $\xv_i$ and output is the time-invariant step size of the SGLD for $t=1,\ldots,T$.
For comparison, vanilla VAE \citep{kingma2013auto} and five recently proposed algorithms are used as benchmarks: semi-implicit VI (SIVI) \citep{yin2018semi}, doubly semi-implicit VI (DSIVI) \citep{molchanov2018doubly}, unbiased implicit VI (UIVI) \citep{titsias2018unbiased}, variational contrastive divergence (VCD) \citep{ruiz2019contrastive}, and variationally inferred sampling (VIS) \citep{gallego2019variationally}. SIVI, DSIVI and UIVI use implicit distributions as the variational distribution to provide a high degree of flexibility. VCD and VIS have been discussed in Section \ref{sec:relatedwork}. We reproduce these approaches with the same configuration and neural network structures as of MIVI.  Note that VCD has been reported to outperform  \citet{hoffman2017learning}, and the latter outperforms \citet{salimans2015markov} that uses Hamiltonian flow (see  \citet{ruiz2019contrastive} and \citet{hoffman2017learning}). 

We evaluate the performance via the average marginal log-likelihood calculated by importance sampling, written as 
$\textstyle 
\log p(\tilde \xv)\approx \log\frac{1}{{\tilde J}}\sum_{j=1}^{\tilde J} \frac{p_{\theta}(\tilde \xv \given \zv_j)p(\zv_j)}{\tilde{q}_{\eta,\phi}(\zv_j)}
$ for reasonably large ${\tilde J}$. See Appendix for detailed settings and discussion. 
For MIVI with $T=5$, we run 0 or 5 SGLD transitions with the optimized step sizes on testing images, denoted respectively by MIVI-5-0 that uses $q_{\phi}$ and MIVI-5-5 that uses ${\textstyle \tilde q^{(5)}_{\eta,\phi}}$ for testing. VIS also has 5 steps of SGLD for training and 5 for testing (denoted by VIS-5-5).   Provided in Table \ref{tab:vae40} is the performance comparison of the VAE algorithms for $\zv\in\mathbb{R}^{40}$. MIVI slightly outperforms other algorithms except VCD on MNIST and outperforms all the others on fMNIST. MIVI can be better than VIS because we use a neural network whose input is $\xv_i$ to learn the SGLD step size of $\zv_i$ for each $i$, whereas VIS learns (or pre-specifies) an equal step size for all $\zv_i$'s. Additional results on VAEs are provided in Appendix.

\begin{table}[!t]\vspace{-2mm}
\centering
\caption{\small Comparison of VAE algorithms on MNIST and fMNIST (${\textstyle \zv \in \mathbb{R}^{40}}$).}\label{tab:vae40} \vspace{-2mm}
\makebox[\linewidth]{
\resizebox{\linewidth}{!}{
\begin{tabular}{lcccccccccc}
  \toprule
 &Vanilla
 & SIVI 
 & DSIVI 
 & UIVI  
 & VCD 
 & VIS-5-5
 & MIVI-5-0 
 & MIVI-5-5\\
  \hline
  MNIST 
    & -86.48 & -84.71  & -83.79  & -83.47  & {\bf -81.01} & -83.82     &     -84.39     & -83.09       \\
 fMNIST 
    &  -121.95  & -118.69 & -112.02 & -109.97 & -109.90 & -106.96    &      -108.86    &  {\bf -102.51}     \\
\bottomrule
\end{tabular}
}}\vspace{-5mm}
\end{table}

\section{Conclusion}\label{sec:conclusion}\vspace{-1.5mm}
The proposed MIVI incorporating a short Markov chain encourages VI and MCMC to overcome each other's limitations and to achieve mutual improvement.
We establish MIVI by auxiliary optimizations so that all the gradients can be rigorously computed and the training becomes stable.
We formulate the Markov chain by transition functions that are partly adopted from valid MCMC 
and partly optimized in the framework of VI. Moreover, we prove the short chain in MIVI can be extrapolated and serve as an efficient MCMC that approaches towards the posterior, and consequently, MIVI facilitates designs of MCMC transitions.
Therefore, capable of posterior approximation and simulation without keeping track of an MCMC trajectory, MIVI is an overall solution to
effective and efficient point estimation and uncertainty quantification.



\newpage
\bibliography{mivi.bib}
\bibliographystyle{iclr2021_conference}

\newpage
\appendix
\begin{center}
\Large{\textbf{MCMC-Interactive Variational Inference: Appendix}}\\\vspace{2mm}
\end{center}
\section{Proofs}
\begin{proof}[Proof of Property \ref{prop:gradient}]
Calculating $\nabla_{\eta}\mathbb{E}_{\tilde{q}_{\eta,\phi^*}(\zv)}\left[\log{p_{\theta}(\xv, \zv)}-\log{q_{\phi^*}(\zv)}\right]$ is straightforward and hence we only need to derive $\nabla_{\eta}\mathbb{E}_{\tilde{q}_{\eta,\phi^*}(\zv)}D^*(\xv, \zv)$. 
Given $\phi^*$, $D^*(\xv,\zv)=\log \frac{\tilde{q}_{\eta,\phi^*}(\zv)}{q_{\phi^*}(\zv)}$, and the fact that the expectation of a score function is $0$, we have
\begin{align*}
\mathbb{E}_{\tilde q_{\eta, \phi^*}(\zv)}\nabla_{\eta}D^*(\xv,\zv) = \mathbb{E}_{\tilde q_{\eta, \phi^*}(\zv)}\nabla_{\eta}\log \tilde{q}_{\eta,\phi^*}(\zv) =0.
\end{align*}
Consequently, with a reparameterizable $\tilde{q}_{\eta,\phi}$ we have 
\begin{align*}
\mathbb{E}_{ \varepsilon}\left[(\nabla_\eta D^*)(\xv, f_\eta(\zv^{(0)},  \varepsilon)) \right]=0. 
\end{align*}
Therefore, taking the gradient of $\mathbb{E}_{\tilde{q}_{\eta,\phi^*}(\zv)} D^*(\xv, \zv)$ with respect to $\eta$ we get
\begin{align*}
\nabla_{\eta}\mathbb{E}_{\tilde{q}_{\eta,\phi^*}(\zv)} D^*(\xv, \zv) &=
\nabla_{\eta}\mathbb{E}_{  \varepsilon} D^*(\xv, f_\eta(\zv^{(0)},  \varepsilon))\\
&=\mathbb{E}_{  \varepsilon} \left[\nabla_{\eta}D^*(\xv, f_\eta(\zv^{(0)},  \varepsilon))\right]\\
&=\mathbb{E}_{  \varepsilon} \left[ (\nabla_{\eta}D^*)(\xv, f_\eta(\zv^{(0)},  \varepsilon)) +  
	(\nabla_{\eta}f_\eta(\zv^{(0)},  \varepsilon ) )  (\frac{dD^*(\xv, \zv)}{d\zv}\,\Big |\,_{\zv=f_\eta(\zv^{(0)}, \varepsilon )}) \right]\\
&=\mathbb{E}_{  \varepsilon} \left[(\nabla_{\eta}f_\eta(\zv^{(0)}, \varepsilon ))  (\frac{dD^*(\xv, \zv)}{d\zv}\,\Big |\,_{\zv=f_\eta(\zv^{(0)},  \varepsilon )}) \right].
\end{align*}  
\end{proof}

\begin{proof}[Proof of Property \ref{prop:bound}]
$$\mathbb{E}_{\tilde{q}_{\eta,\phi}(\zv)}\log\frac{p(\xv, \zv)}{\tilde{q}_{\eta,\phi}(\zv)} = \mathbb{E}_{\tilde{q}_{\eta,\phi}(\zv)}\log\frac{p(\xv, \zv)}{ q_{\phi}(\zv)} - \mbox{KL}(\tilde{q}_{\eta,\phi}(\zv)\,||\, q_{\phi}(\zv))
\leq \mathbb{E}_{\tilde{q}_{\eta,\phi}(\zv)}\log\frac{p(\xv, \zv)}{ q_{\phi}(\zv)}.
$$
\end{proof}


\begin{lem}[Chain rule of KL divergence, page 25 of \citet{cover2006elements}]\label{lemma:chainrule} 
\begin{align*}
\emph{\mbox{KL}}(q(w,z)\,||\,q'(w,z)) = \emph{\mbox{KL}}(q(w)\,||\,q'(w))+\mathbb{E}_{q(w)}\emph{\mbox{KL}}(q(z\given w)\,||\,q'(z\given w)).    
\end{align*}
\end{lem}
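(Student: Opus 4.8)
The plan is to unfold the KL divergence as an expectation of a log-density-ratio and then exploit the factorization of a joint density into a marginal times a conditional. First I would write
\begin{align*}
\mbox{KL}(q(w,z)\,||\,q'(w,z)) = \mathbb{E}_{q(w,z)}\log\frac{q(w,z)}{q'(w,z)},
\end{align*}
and substitute $q(w,z)=q(w)\,q(z\given w)$ and $q'(w,z)=q'(w)\,q'(z\given w)$ inside the logarithm, so that the integrand becomes $\log\frac{q(w)}{q'(w)}+\log\frac{q(z\given w)}{q'(z\given w)}$.

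Next I would use linearity of expectation to split the two summands. The first summand depends on $w$ only, so taking its expectation under the joint $q(w,z)$ collapses to an expectation under the marginal $q(w)$, which is exactly $\mbox{KL}(q(w)\,||\,q'(w))$. For the second summand I would apply the tower rule, $\mathbb{E}_{q(w,z)}[\,\cdot\,]=\mathbb{E}_{q(w)}\big[\mathbb{E}_{q(z\given w)}[\,\cdot\,]\big]$; the inner expectation of $\log\frac{q(z\given w)}{q'(z\given w)}$ over $q(z\given w)$ is by definition $\mbox{KL}(q(z\given w)\,||\,q'(z\given w))$, and averaging it against $q(w)$ yields the claimed term $\mathbb{E}_{q(w)}\mbox{KL}(q(z\given w)\,||\,q'(z\given w))$. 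Adding the two pieces recovers the identity.

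There is essentially no real obstacle: the only care needed is the standing measure-theoretic regularity (absolute continuity of $q'$ with respect to $q$, so the ratios and their logarithms are almost surely well defined, plus integrability so that the expectation splits and the tower rule applies), which I would take as implicit hypotheses exactly as in \citet{cover2006elements}. Since this is a textbook identity, I would keep the write-up to the few displayed lines sketched above; its role here is purely as the engine behind the telescoping KL bounds used in the proofs of Lemmas~\ref{thm:gibbs} and~\ref{cor:collapsed}.
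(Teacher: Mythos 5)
Your argument is correct and is precisely the standard derivation: the paper itself offers no proof of this lemma, deferring entirely to \citet{cover2006elements}, and your expansion of the log-density ratio via the factorization $q(w,z)=q(w)\,q(z\given w)$ followed by linearity and the tower rule is exactly the textbook argument being cited. Nothing further is needed beyond the measure-theoretic caveats you already note.
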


\begin{proof}[Proof of Lemma \ref{thm:gibbs}]
Let $\mu'^{(t)}$ be an arbitrary distribution on the state space of $M$ at $t$ with the joint mass/density function denoted by $q'$ and thus $q'(\wv^{(t)},\zv^{(t)},  \zv^{(t+1)}) = q'(\wv^{(t)},\zv^{(t)})r( \zv^{(t+1)}\given \wv^{(t)},\zv^{(t)})$. By Lemma \ref{lemma:chainrule},
\begin{align*}
&\mbox{KL}(q(\wv^{(t)},\zv^{(t)},  \zv^{(t+1)}\,||\, q'(\wv^{(t)},\zv^{(t)},  \zv^{(t+1)}))  \\
= & \mbox{KL}(q(\wv^{(t)},\zv^{(t)})\,||\, q'(\wv^{(t)},\zv^{(t)})) +\mathbb{E}_{q(\wv^{(t)},\zv^{(t)})}\mbox{KL}(q(\zv^{(t+1)}\given \wv^{(t)},\zv^{(t)})\,||\, q'(\zv^{(t+1)} \given \wv^{(t)},\zv^{(t)}))  \\
= & \mbox{KL}(q(\wv^{(t)},\zv^{(t)} )\,||\, q'(\wv^{(t)},\zv^{(t)})) +\mathbb{E}_{q(\wv^{(t)},\zv^{(t)})}\mbox{KL}(r(\zv^{(t+1)}\given \wv^{(t)}) \,||\, r(\zv^{(t+1)} \given \wv^{(t)})) \\
= & \mbox{KL}(q(\wv^{(t)},\zv^{(t)})\,||\, q'(\wv^{(t)},\zv^{(t)}))
\end{align*}
Again, by Lemma \ref{lemma:chainrule},
\begin{align*}
 &\mbox{KL}(q(\wv^{(t)},\zv^{(t)},\zv^{(t+1)})\,||\, q'(\wv^{(t)},\zv^{(t)},\zv^{(t+1)}))  \\
= & \mbox{KL}(q(\wv^{(t)},\zv^{(t+1)})\,||\, q'(\wv^{(t)},\zv^{(t+1)})) +\mathbb{E}_{q(\wv^{(t)},\zv^{(t+1)})}\mbox{KL}(q(\zv^{(t)}\given \wv^{(t)},\zv^{(t+1)}) \,||\, q'(\zv^{(t)} \given \wv^{(t)},\zv^{(t+1)}))  \\
\geq & \mbox{KL}(q(\wv^{(t)},\zv^{(t+1)} )\,||\, q'(\wv^{(t)},\zv^{(t+1)} ))
\end{align*}
So $\mbox{KL}(q(\wv^{(t)},\zv^{(t)} )\,||\, q'(\wv^{(t)},\zv^{(t)} ))
\geq \mbox{KL}(q(\wv^{(t)},\zv^{(t+1)} )\,||\, q'(\wv^{(t)},\zv^{(t+1)} ))$. Let $q'(\wv^{(t)},\zv^{(t)}) = \pi(\wv,\zv)$, i.e., $\mu'_t$ is the posterior distribution to which the Gibbs sampler $G$ converges. Since $M$'s transition  $r(\zv|\wv)$ is the conditional distribution as well as the transition of $G$, $q'(\wv^{(t)},\zv^{(t+1)}) = q'(\wv^{(t)})r(\zv^{(t+1)}\given \wv^{(t)})=\pi(\wv,\zv)$. Therefore, 
\begin{align*}
    \mbox{KL}(q(\wv^{(t)},\zv^{(t)}))\,||\, \pi(\wv,\zv))\geq \mbox{KL}(q(\wv^{(t)},\zv^{(t+1)})\,||\, \pi(\wv,\zv)).
\end{align*}
\end{proof}

\begin{proof}[Proof of Lemma \ref{cor:collapsed}]
Since $\mu^{(t)}$ can be any distribution, we assume that $q(\wv^{(t)},\zv^{(t)}) = q(\zv^{(t)})r(\wv^{(t)}\given \zv^{(t)})$ for arbitrary $q(\zv^{(t)})$.
By the proof of Lemma \ref{thm:gibbs},
\begin{align*}
    &\mbox{KL}(q( \wv^{(t)},\zv^{(t)})\,||\,q'( \wv^{(t)},\zv^{(t)})) \geq \mbox{KL}(q(\wv^{(t)},\zv^{(t+1)})\,||\,q'( \wv^{(t)},\zv^{(t+1)})). 
\end{align*}
By Lemma \ref{lemma:chainrule}, 
The left-hand side of this inequality is equal to 
\begin{align*}
    \mbox{KL}(q( \zv^{(t)})\,||\,\pi(\zv)) +  \mathbb{E}_{q(\zv^{(t)})}\mbox{KL}(r(\wv \given \zv^{(t)} ) \,||\, \pi(\wv\given \zv^{(t)})),
\end{align*}
and the right-hand side is equal to 
\begin{align*}
    \mbox{KL}(q( \zv^{(t+1)})\,||\,q'( \zv^{(t+1)})) + \mathbb{E}_{q(\zv^{(t+1)})}\mbox{KL}(q(\wv^{(t)}\given \zv^{(t+1)}) \,||\,q'(\wv^{(t)}\given \zv^{(t+1)})).
\end{align*}
Since $\mathbb{E}_{q(\zv^{(t)})}\mbox{KL}(r(\wv\given \zv^{(t)}) \,||\, \pi(\wv\given \zv^{(t)}))\leq \mathbb{E}_{q(\zv^{(t+1)})}\mbox{KL}(q(\wv^{(t)}\given \zv^{(t+1)}) \,||\, q'(\wv^{(t)}\given \zv^{(t+1)}))$, we have
\begin{align*}
    \mbox{KL}(q( \zv^{(t)})\,||\,\pi(\zv)) \geq  \mbox{KL}(q( \zv^{(t+1)})\,||\,q'( \zv^{(t+1)})).
\end{align*}
Considering $q'( \zv^{(t+1)}) = \int r(\zv^{(t+1)}\given \wv^{(t)})q'(\wv^{(t)})d \wv^{(t)}$ and  $r(\zv\given \wv)$ is the conditional distribution of $\zv$ given $\wv$, we have $q'( \zv^{(t+1)}) = q'(\zv^{(t)})=\pi(\zv)$. Therefore,
\begin{align*}
    \mbox{KL}(q( \zv^{(t)})\,||\, \pi(\zv))\geq \mbox{KL}(q(\zv^{(t+1)})\,||\, \pi(\zv)).
\end{align*}
\end{proof}

\section{Full algorithm and detailed implementation}
We summarize the implementation of MIVI as Algorithm \ref{algorithm}. The neural network structure of discriminator $D$ depends on the dimension of $\bf z$ and does not have to be complex because, anyway, the sigmoid function saturates when $q_\phi$ and $\tilde q_{\eta,\phi}$ are far from each other at an early stage of training. To avoid a potentially time-consuming optimization of $D$, we simply omit it at the early stage of training according to the analysis of Property \ref{prop:bound}, and start training $D$ after first $M$ epochs when $q_\phi$ and $\tilde q_{\eta,\phi}$ get closer. In this way, a reasonably flexible $D$ is good enough. Since a cross-entropy loss is used to train $D$, $D$ works well if the loss drops from a large value towards 0, which have been observed in our experiments.
Furthermore, we find that the algorithm converges faster if we stop the gradient of $\zv_j^{(t)}$ with respect to $\phi$ in line 10 of Algorithm \ref{algorithm}; in PyTorch, we use the command \texttt{.detach()} on $\zv_j^{(t)}$. Essentially, stopping the gradient is only optional and does not change parameter estimations in our experiments. We minimize \eqref{eq:xentropy} where the expectation is approximated by sampling $\zv^{(t)}$ from $\tilde q_{\eta,\phi}$ to let $q_\phi$ and $\tilde q_{\eta,\phi}$ get close to each other; stopping the gradient of $\zv^{(t)}$ with respect to $\phi$ can be regarded as fixing $\tilde q_{\eta,\phi}$. In this way, we let $q_\phi$ approach to $\tilde q_{\eta,\phi}$ that has been well learned by optimizing \eqref{eq:obj_lrbnd} so faster convergence can be achieved. Note that $\tilde q_{\eta,\phi}$ is less and less dependent on $\phi$ as the number of transitions increases. So there is no need to stop the gradient if $T$ is large.

\begin{algorithm}
    \caption{\small MCMC-interactive variational inference}\label{algorithm}
  \begin{algorithmic}[1]
    \INPUT  Data $\xv$, model $p_{\theta}(\xv\given \zv)$, prior $p(\zv)$, reparameterizable variational family $q_{\phi}$ and Markov chain updating function $f_{\eta}$ implied by reparameterizable $h_\eta$
    \OUTPUT $\theta, \phi, \eta$
    \STATE $\mbox{Epoch}\leftarrow 0$.
    \WHILE{not converge}
      \STATE Draw mini-batch $\xv$ of size $n$ from training data of Size $N$ (for MIVI with SGLD)
      \STATE Sample $\zv_j^{(0)} \overset{iid}{\sim} q_{\phi}(\zv)$, $j=1,\ldots,J$. 
      \STATE\# Begin Markov chain transitions:
      \FOR{$t=1,\cdots,T$, say $T=3$ and $j=1,\ldots,J$} 
      	\STATE  $\zv_j^{(t)}=f_{\eta}(\zv_j^{(t-1)}, \varepsilon_j^{(t)})$ with some independent random vector $\varepsilon_j^{(t)}$.\label{eq:mcmc_step}
      \ENDFOR
      \STATE\# Begin optimization:
      \STATE Update $\phi$ by descending the gradient  
      				$$-\nabla_{\phi}\frac{1}{J\times T}\sum_{j,t}\log q_{\phi}(\zv_j^{(t)})$$
      \IF{$\mbox{Epoch}<M$, say $M=100$}
      	 \STATE Update $\theta$ and $\eta$ by ascending the gradient
      				$$\nabla_{\theta,\eta} \frac{1}{J\times T}\sum_{j,t}\left[\log\frac{p_{\theta}(\xv \given \zv_j^{(t)})p(\zv_j^{(t)})}{ q_{\phi}(\zv_j^{(t)})} \right]$$
      \ELSE
        \STATE Update $\theta$ and $\eta$ by ascending the gradient
      				$$\nabla_{\theta,\eta} \frac{1}{J\times T}\sum_{j,t}\left[\log\frac{p_{\theta}(\xv \given \zv_j^{(t)})p(\zv_j^{(t)})}{ q_{\phi}(\zv_j^{(t)})} - D(\zv_j^{(t)})\right]$$
        \STATE Update $D$ by maximizing 
      				$$\frac{1}{J\times T}\sum_{j,t}\log\sigma(D(\zv_j^{(t)})) +\frac{1}{J}\sum_{j}\log\left(1-\sigma(D(\zv_j^{(0)}))\right)$$
      \ENDIF				
      \STATE  $\mbox{Epoch}\leftarrow \mbox{Epoch}+1$
    \ENDWHILE
  \end{algorithmic}
\end{algorithm}

\section{Bayesian logistic regression and Polya gamma distribution}
For a unique $\xv_i$, $i=1,\cdots,n$ and $y_i\in\{0,1\}$, the hierarchical model for Bayesian logistic regression can be expressed as 
\begin{align*}
y_i &\sim \mbox{Bernoulli}(1/(1+e^{-\xv_i'\beta})),\\
\beta &\sim \mathcal{N}(b,B).
\end{align*}
As in  \citet{polson2013bayesian}, under the Polya-gamma (PG) distribution based data augmentation,
the full conditional distributions can be expressed as 
\begin{align*}
(\omega_i\given-) &\sim \mbox{PG}(1, \xv_i'\beta),~i=1,\ldots,n,\\
(\beta\given -) &\sim \mathcal{N}\left(\Sigma(X'\kappa + B^{-1}b), \Sigma\right),
\end{align*}
where $\Sigma = (\bm X'\Omega \bm X  +B)^{-1}$, $\Omega=\mbox{diag}(\omega_1,\cdots, \omega_n)$ and $\kappa = (y_1-\frac{1}{2},\cdots,y_n-\frac{1}{2})$. Note that $\omega_i\sim \mbox{PG}(1,\xv_i'\beta)$ is equivalent to $\omega_i{=}\frac{1}{2\pi}\sum_{k=1}^{\infty}\frac{\gamma_k}{(k-1/2)^2+(\xv_i'\beta/2\pi)^2}$ where $\gamma_k\overset{iid}{\sim}\mbox{Gamma}(1,1)$. 
To generate PG random variables, \citet{polson2013bayesian} use rejection sampling with finite truncations of this expression as the proposal and \citet{zhou2012lognormal} uses finite truncations together with matching the first- and second-order moments. Neither solution, however, can be used as a Markov chain transition in MIVI due to the lack of reparameterization.

We plot the estimated posteriors of $\omega_i$'s associated to the synthesized data by Gibbs sampling and MIVI in Figure \ref{fig:pg_toy} and those of binary MNIST by MIVI in Figure \ref{fig:pg_mnist} for eight randomly selected $i$ in training data.
\begin{figure}[!ht]\vspace{0mm}
 \centering
\includegraphics[width=0.8\linewidth]{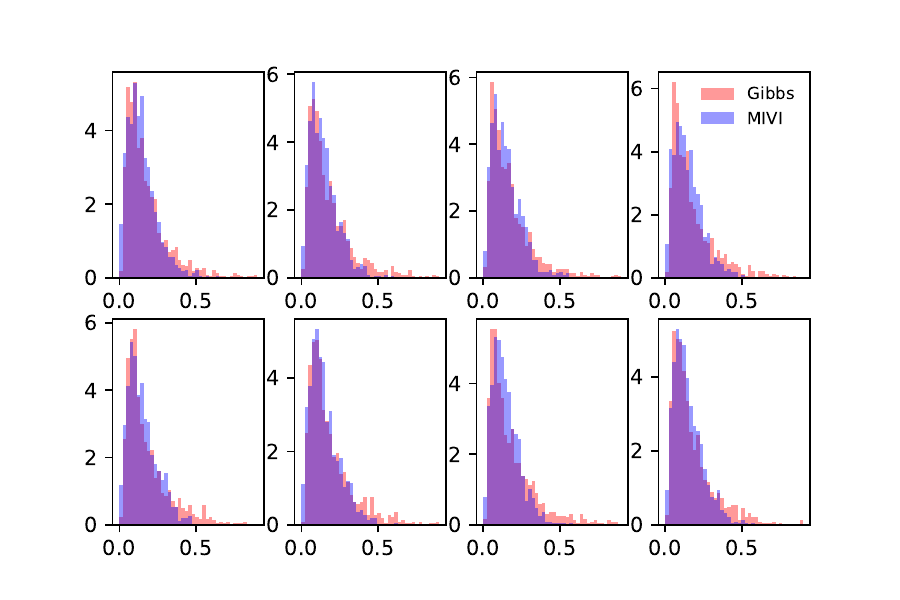}\vspace{-2.5mm} 
 \caption{\small PG auxiliary variable $\omega_i$ by Gibbs sampling (red) and MIVI (blue) for eight randomly selected samples of the synthesized data in Section \ref{sec:logistic}.}\label{fig:pg_toy}\vspace{-1mm}
 \end{figure}%

\begin{figure}[!ht]
 \centering
\includegraphics[width=0.8\linewidth]{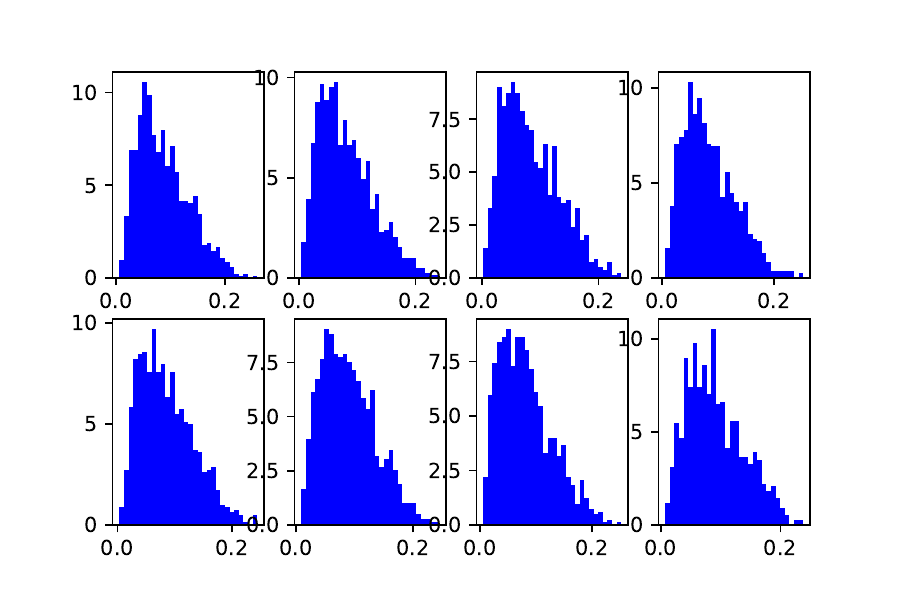}\vspace{-2.5mm} 
 \caption{\small PG auxiliary variable $\omega_i$ by MIVI for eight randomly selected training images of the binary MNIST data in Section \ref{sec:logistic}.}\label{fig:pg_mnist}\vspace{-1mm}
 \end{figure}%

\section{Experiment settings}\label{sec:settings}
\subsection{General settings}
With the definitions of $J$ and $M$ in Algorithm \ref{algorithm}, 
we run 1,000 epochs with $J=200$, $T=5$, and $M=100$ in the toy experiment of Section \ref{sec:toy} 
and 2,000 epochs with $J=1000$ and $M=0$ for the negative binomial model in Section \ref{sec:NB}. For the Bayesian logistic in Section \ref{sec:logistic} we run 1,000 epochs with $J=200$ and $M=0$. For the Bayesian bridge regression in Section \ref{sec:bridge} we run 1,000 epochs with $J=100$ and $M=0$. For the VAE by MIVI in Section \ref{sec:vae} we run 2,500 epochs with $J=10$ and $M=200$.

For experiments of VAE on MNIST and FashionMNIST, we follow the original partition to split the data as 50,000/10,000/10,000 for training/validation/test. The MNIST data is dynamically binarized, and the FashionMNIST data is binarized with 0.5 as a threshold for each pixel. The dimension of the latent variable $\zv$ is set as 40.  
To ensure the fairness of comparison, we use the same network architecture to build up the VAE on UIVI and VCD and use the same experiment configuration as in \citet{titsias2018unbiased} and \citet{ruiz2019contrastive}. We apply a 2-hidden-layer network with 200 hidden units for both encoder and decoder and choose \textit{ReLU} as the activation function. Then we optimize the model using the initial learning rate as $0.001$ with a $10\%$ decay for every 15,000 iterations, and choose the best model with validation set for testing. Specifically for SIVI-VAE and DSIVI-VAE, the dimension of $\psi$ is set as 500. For MIVI we run $2,500$ epochs with the initial Adam learning rate as $0.001$ (with a $12\%$ decay for every 100 epochs) for MNIST and $0.0001$ (with a $10\%$ decay for every 200 epochs) for fMNIST.

\subsection{Performance evaluation of MIVI on VAEs}
In Section \ref{sec:vae} we evaluate MIVI for VAEs by estimating the average marginal log-likelihood,
\begin{align*}
\log p(\tilde \xv)\approx \log\frac{1}{{\tilde J}}\sum_{j=1}^{\tilde J} \frac{p_{\theta}(\tilde \xv \given \zv_j)p(\zv_j)}{\tilde{q}_{\eta,\phi}(\zv_j)} 
=\log\frac{1}{{\tilde J}}\sum_{j=1}^{\tilde J} \frac{p_\theta(\tilde \xv \given \zv_j)p(\zv_j)}{q_{\phi}(\zv_j\given \tilde{\xv})} e^{-D^*(\tilde \xv, \zv_j)}. 
\end{align*}
The correctness of right-hand side of this equation depends on an optimal discriminator $D^*$, which can be hard to verify. Therefore, we use the Gaussianity of SGLD and a Monte Carlo method to evaluate $\tilde q_{\eta,\phi}$. Specifically,  the updating function of SGLD is $f_\eta(\zv, \epsilon)$ such that
\begin{align*}
\zv^{(t)} &= f_{\eta_t}(\zv^{(t-1)}, \epsilon_t)\\
&=
\zv^{(t-1)}+
 \frac{\eta_t}{2}\odot [\nabla_{\zv} \log p(\zv^{(t-1)}) +\frac{N}{n} \nabla_{\zv} \log p(\xv \given \zv^{(t-1)})] + \epsilon_t 
\end{align*}
where $\epsilon_t \sim \mathcal{N}(\bm 0,\mbox{diag}(\eta_t))$ and $\odot$ stands for element-wise multiplication. So 
we have $\zv^{(T)}\sim \mathcal{N}(\mu(\zv^{(T-1)}, \eta_T),\eta_T)$ where $\mu(\zv, \eta) = \zv +
 \frac{\eta }{2}\odot[\nabla_{\zv} \log p(\zv ) +\frac{N}{n} \nabla_{\zv} \log p(\xv \given \zv )]$ and consequently, 
\begin{align*}
\zv^{(T)}&\sim \mathcal{N}\left(\mu(\zv^{(T-1)}, \eta_T),\mbox{diag}(\eta_T)\right)\\
& =  \mathcal{N}\left(\mu( f_{\eta_{T-1}}(\zv^{(T-2)}, \epsilon_{T-1}), \eta_{T-1}),\mbox{diag}(\eta_T)\right)\\
&= \mathcal{N}\left(\mu( f_{\eta_{T-1}}(f_{\eta_{T-2}}(\zv^{(T-3)}, \epsilon_{T-2}), \epsilon_{T-1}), \eta_{T-1}),\mbox{diag}(\eta_T)\right)\\
&= \mathcal{N}\left(\mu( f_{\eta_{T-1}}(f_{\eta_{T-2}}(\ldots(f_{\eta_{1}}(\zv^{(0)}, \epsilon_{1}),\epsilon_{2})\ldots), \epsilon_{T-1}),\mbox{diag}(\eta_T)\right).
\end{align*}
Therefore, the marginal distribution $\tilde q_{\eta,\phi}$ is equal to 
\begin{align}
&  \int \ldots \int \mathcal{N}\left(\mu( f_{\eta_{T-1}}(f_{\eta_{T-2}}(\ldots(f_{\eta_{1}}(\zv^{(0)}, \epsilon_{1}),\epsilon_{2})\ldots), \epsilon_{T-1})),\mbox{diag}(\eta_T)\right) dP(\epsilon_1) \ldots dP(\epsilon_{T-1})q_\phi(\zv^{(0)})d\zv^{(0)}\nonumber\\
 \approx & \frac{1}{K}\sum_{k=1}^{K} \mathcal{N}\left(\mu( f_{\eta_{T-1}}(f_{\eta_{T-2}}(\ldots(f_{\eta_{1}}(\zv^{(0)}_k, \epsilon_{1,k}),\epsilon_{2,k})\ldots), \epsilon_{T-1,k})),\mbox{diag}(\eta_T)\right)\label{eq:sgld_marginal}
\end{align}
where $\zv_k^{(0)}\overset{iid}{\sim} q_\phi$, $\epsilon_{t,k}\overset{ind}{\sim}\mathcal{N}(0,\mbox{diag}(\eta_t))$ for  $k=1,\ldots,K$ and $t=1,\ldots,T-1$. We evaluate the performance of MIVI for VAEs by 
\begin{align}
\log p(\tilde \xv)&\approx \log\frac{1}{{\tilde J}}\sum_{j=1}^{\tilde J} \frac{p_{\theta}(\tilde \xv \given \zv_j)p(\zv_j)}{\hat {q}_{\eta,\phi}(\zv_j)} \label{eq:importance_sampling}
\end{align}
where $\zv_j = f_{\eta_{T }}(f_{\eta_{T-1}}(\ldots(f_{\eta_{1}}(\zv^{(0)}_j, \epsilon_{1,j}),\epsilon_{2,j})\ldots), \epsilon_{T,j})$, $\zv^{(0)}_j\overset{iid}{\sim} q_\phi$ and 
\begin{align}
\hat{q}_{\eta,\phi}(\zv_j) = &\frac{1}{K+1}  \mathcal{N}\left(\zv_j\given\mu( f_{\eta_{T-1}}(f_{\eta_{T-2}}(\ldots(f_{\eta_{1}}(\zv^{(0)}_j, \epsilon_{1,j}),\epsilon_{2,j})\ldots), \epsilon_{T-1,j})),\eta_T\right)+\nonumber\\
&\frac{1}{K+1} \sum_{k=1}^K  \mathcal{N}\left(\zv_j\given \mu( f_{\eta_{T-1}}(f_{\eta_{T-2}}(\ldots(f_{\eta_{1}}(\zv^{(0)}_k, \epsilon_{1,k}),\epsilon_{2,k})\ldots), \epsilon_{T-1,k})),\eta_T\right) \label{eq:marginal_sgld}
\end{align}
analogous to \citet{yin2018semi}.

We set ${\tilde J}=1000$ and $K=50$ for the evaluation by the importance sampling. Note what we are estimating in \eqref{eq:importance_sampling} 
is in fact a lower bound of $\log p(\tilde \xv)$ \citep{burda2015importance}. Its quality depends on both the decoder $p_\theta(\xv\given \zv)$ and the encoder which is used as the importance distribution; fixing $p_\theta(\xv\given \zv)$, a poor importance distribution may give rise to a loose bound. The estimation of MIVI-5-5 (using $\tilde q^{(T)}_{\eta,\phi}$ as the importance distribution) is better than that of MIVI-5-0 (using $q_\phi$ as the importance distribution) because $p_\theta(\xv\given \zv)$ is trained based on $\tilde q^{(T)}_{\eta,\phi}$. 
Moreover, in case of multimodality of $p(\zv\given \xv)$ which is very probable for VAE models, $q_\phi$ can be lighter-tailed than $\tilde q_{\eta,\phi}$ and may result in larger variance of the importance sampling estimation.
In addition, we need be  careful about extrapolation when conducting the importance sampling based estimation as in~\eqref{eq:importance_sampling}, which is only valid under the assumption that the importance distribution $q$ satisfies $q(\zv)>0$ when $p(\xv\given \zv)p(\zv)\neq 0$ \citep{owen2009importance}. Concretely, though we have observed that the value obtained by \eqref{eq:importance_sampling} for MIVI-5-$t$ increases as $t$ grows, 
 that value may no longer reflect the true performance of the model, since  $\tilde{q}^{(t)}_{\eta,\phi}$ may no longer maintain non-negligible density on the regions where the joint likelihood has non-negligible values. So we only compare MIVI-5-5 so that the number of transitions are the same in training and testing.

\section{Supplimentary experimental results}
\subsection{Toy experiments}\label{sec:toy}
\begin{table}[!ht]
\centering
\caption{\small Target bivariate distributions.}\label{tab:bivariate}
\makebox[\linewidth]{
\resizebox{\linewidth}{!}{
\begin{tabular}{lll}
  \toprule
Correlated Gaussian & Banana & Gaussian mixture   \\ 
  \hline
$\mathcal{N}\left(
\begin{bmatrix}
0\\ 
0
\end{bmatrix},
\begin{bmatrix}
1 & 0.8\\ 
0.8 & 1
\end{bmatrix}\right)$ 
&  
$
\mathcal{N}(z_1;\frac{z_2^2}{4},1) 
\mathcal{N}(z_2;0,4)
$
& 
$\frac{1}{2}\mathcal{N}\left(
\begin{bmatrix}
-1\\ 
-1
\end{bmatrix},
\begin{bmatrix}
1 & -0.5\\ 
-0.5 & 1
\end{bmatrix}\right)+\frac{1}{2}\mathcal{N}\left(
\begin{bmatrix}
1.3\\ 
1.3
\end{bmatrix},
\begin{bmatrix}
1 & 0.3\\ 
0.3 & 1
\end{bmatrix}\right)$ \\ 
   \bottomrule
\end{tabular}
}}
\end{table}
\begin{figure}[!ht]\vspace{0mm}
 \centering
 \begin{subfigure}[t]{0.33\textwidth}
 \centering
\includegraphics[width=1\linewidth]{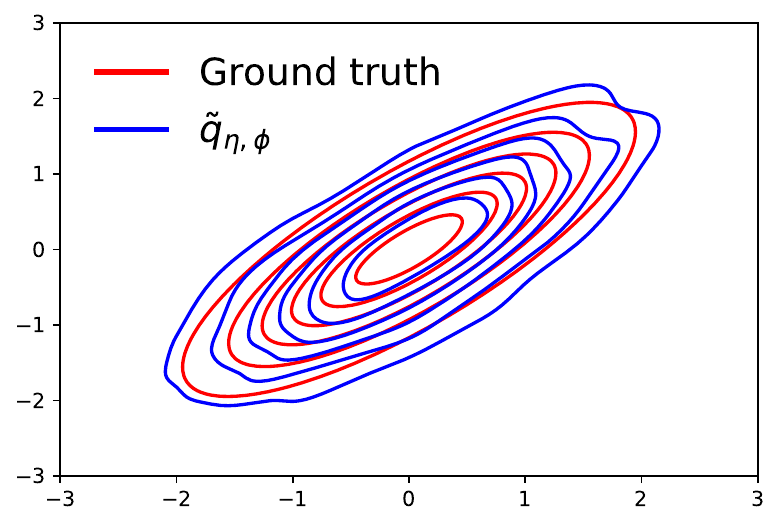}\vspace{-2.5mm} 
 \caption{\small Correlated Gaussian.}\vspace{-1mm}
 \end{subfigure}%
 \begin{subfigure}[t]{0.33\textwidth}
 \centering
\includegraphics[width=1\linewidth]{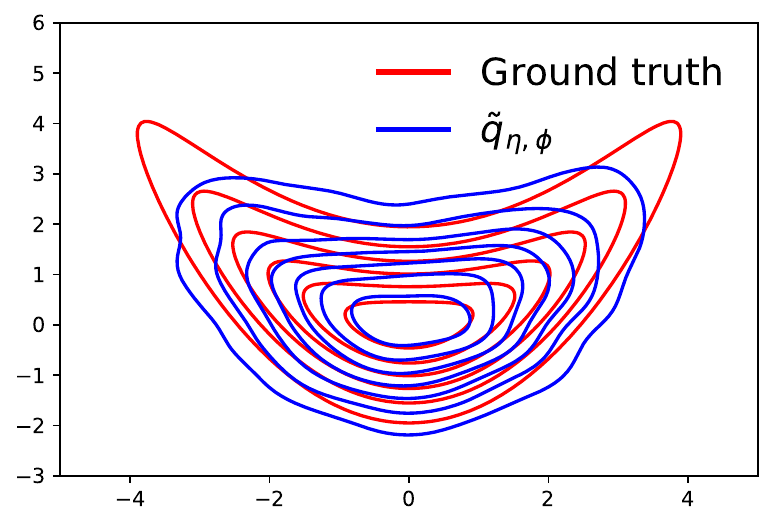}\vspace{-2.5mm} 
 \caption{\small Banana.}\vspace{-1mm}
 \end{subfigure}%
\begin{subfigure}[t]{0.33\textwidth}
 \centering
\includegraphics[width=1\linewidth]{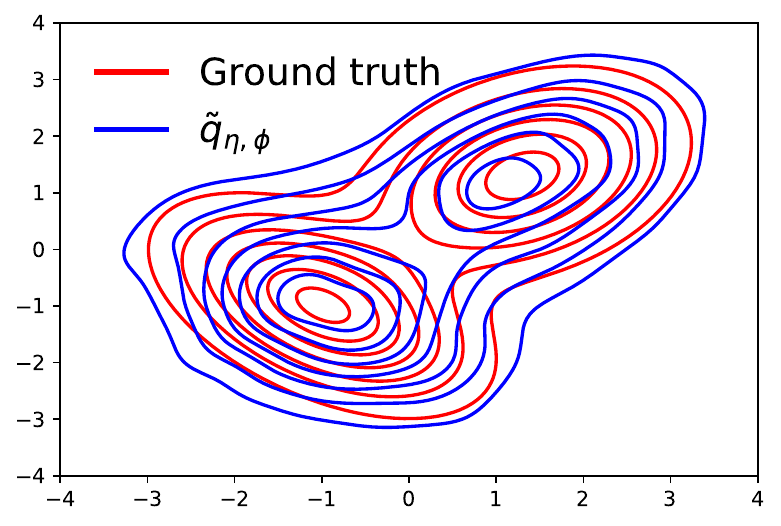}\vspace{-2.5mm} 
\caption{\small Gaussian mixture.}\vspace{-1mm}
 \end{subfigure}
\caption{\small Target distributions (red) and fitted $\tilde q_{\eta,\phi}$ (blue) of MIVI.
 }\label{fig:bivariate}\vspace{-1mm}
\end{figure}

\begin{figure}[!ht]\vspace{0mm}
 \centering
 \begin{subfigure}[t]{0.33\textwidth}
 \centering
\includegraphics[width=1\linewidth]{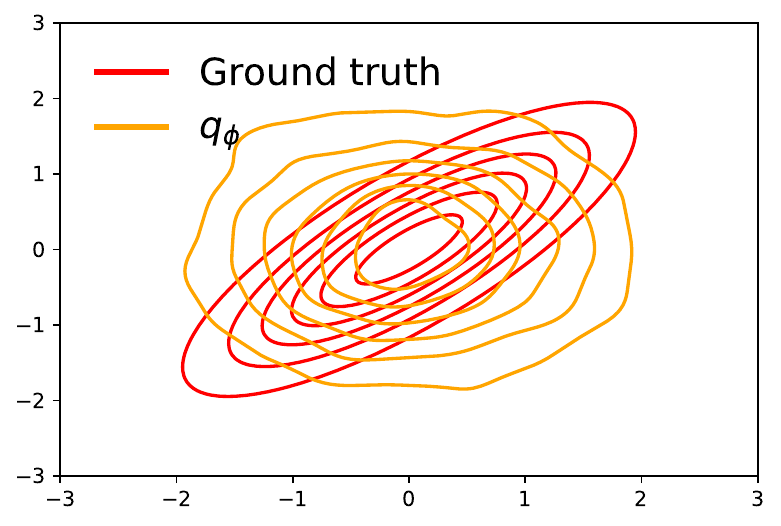}\vspace{-2.5mm} 
 \caption{\small Correlated Gaussian.}\vspace{-1mm}
 \end{subfigure}%
 \begin{subfigure}[t]{0.33\textwidth}
 \centering
\includegraphics[width=1\linewidth]{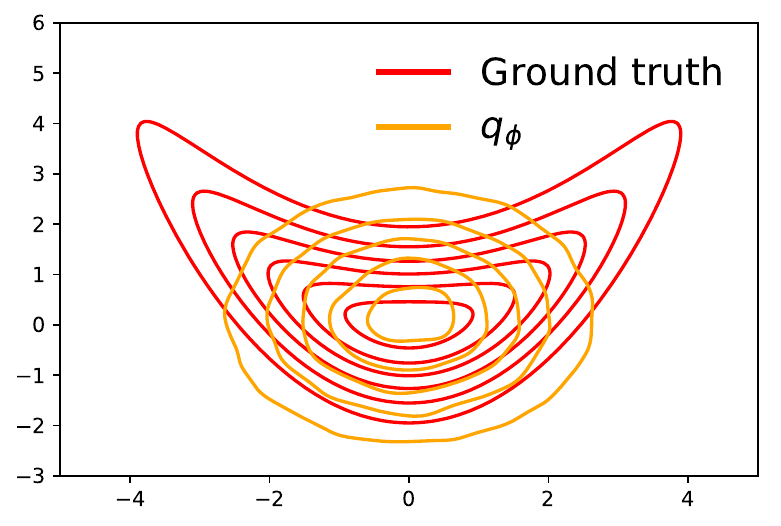}\vspace{-2.5mm} 
 \caption{\small Banana.}\vspace{-1mm}
 \end{subfigure}%
\begin{subfigure}[t]{0.33\textwidth}
 \centering
\includegraphics[width=1\linewidth]{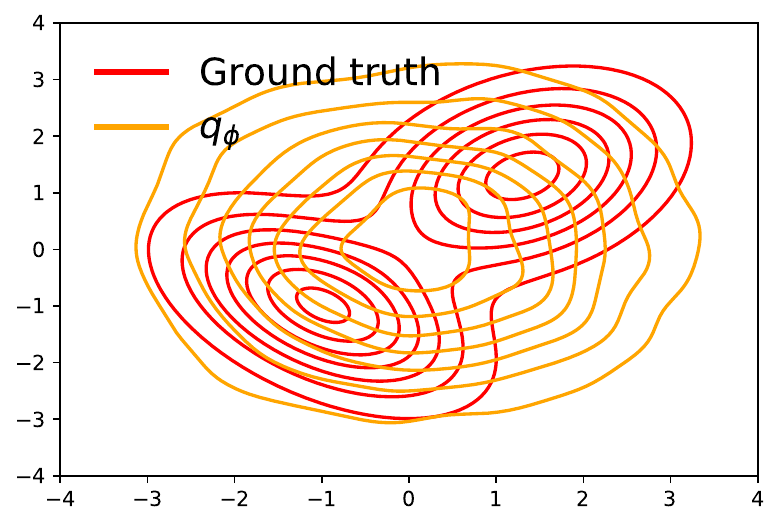}\vspace{-2.5mm} 
\caption{\small Gaussian mixture.}\vspace{-1mm}
 \end{subfigure}
\caption{\small Target distributions (red) and fitted $ q_{\phi}$ (orange) of MIVI.
 }\label{fig:bivariate_q0}\vspace{-1mm}
\end{figure}

To show the validity and flexibility of $\tilde{q}_{\phi,\eta}$ of SGLD in MIVI, we fit synthetic bivariate distributions listed in Table \ref{tab:bivariate}. 
Figure \ref{fig:bivariate} shows the contour plots of the synthetic bivariate distributions (red) along with the fitted $\tilde{q}_{\eta,\phi}(\zv)$ (blue). In all cases, $\tilde{q}_{\eta,\phi}(\zv)$ has well recovered the target distribution and captured the bivariate correlation, dependence, and multimodality, respectively, despite the small number of SGLD updates. In addition, Figure \ref{fig:bivariate_q0} shows that $q_\phi$ of MIVI has captured the large varianace of each dimension of $\zv$.


\subsection{Additional results of VAE}

\begin{table}[!ht]
\centering
\caption{\small Comparison of VAE algorithms on MNIST and fMNIST (${\textstyle \zv \in \mathbb{R}^{10}}$).}\label{tab:vae_z10_new} 
\makebox[\linewidth]{
\resizebox{\linewidth}{!}{
\begin{tabular}{lcccccccc}
  \toprule
 &Vanilla
 & SIVI 
 & DSIVI 
 & UIVI
 & VCD
 & VIS-5-5
 & MIVI-5-0 
 & MIVI-5-5\\
  \midrule
MNIST  
    &  -97.82   & -96.78  & {-89.96} & -94.09 & -95.86 & {\bf -87.65}   & -92.04 & -88.50  \\ 
  fMNIST  
    & -124.73 & -121.42 & {-121.39}  & {\bf -110.72} & -117.65 & -116.27  & -117.74  & -113.17   \\ 
\bottomrule
\end{tabular}
}}
\end{table}

We try a lower dimensional $\zv$, set $\zv\in \mathbb{R}^{10}$ in all models, keep other settings the same as in $\zv\in \mathbb{R}^{40}$, and report the VAE model comparison in Table~\ref{tab:vae_z10_new} where we cite the results of UIVI and VCD for $\zv\in\mathbb{R}^{10}$ from \citet{titsias2018unbiased} and \citet{ruiz2019contrastive}, respectively. It is shown that MIVI-5-5 is as good as VIS-5-5 which also uses SGLD for a refined encoder, and outperforms implicit VI approaches (except UIVI on fMNIST) because MIVI's encoder as in \eqref{eq:marginal_sgld} is not only flexible but also less complex in parameterization and hence easy to optimize.
We show reconstructions of randomly selected binarized MNIST testing images by MIVI in Figure \ref{fig:mnist_test} panel (a) and some of the most improved ones in panel (b). The first column is the testing image, the second column is the reconstruction using $\zv \sim q_{\phi}$, and the third to the twelfth columns use $\zv$ from $\tilde q_{\eta,\phi}^{(t)}$ for $t=1,\ldots,10$, respectively, with fine-tuned step sizes. 
Overall, the reconstructions are good enough by $\zv \sim q_{\phi}$ and can be further improved by $\tilde q_{\eta,\phi}^{(t)}$ as $t$ increases.

\begin{figure}[!ht]
 \centering
 \begin{subfigure}[t]{0.48\textwidth}
 \centering
\includegraphics[width=0.8\linewidth]{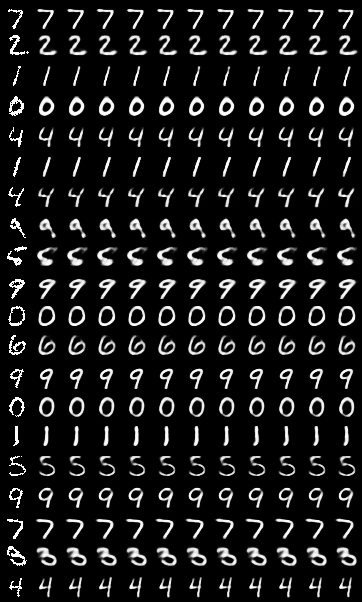}\vspace{-1mm} 
 \caption{\small Randomly selected.}\vspace{-1mm}
 \end{subfigure}%
 \hfill
 \begin{subfigure}[t]{0.48\textwidth}
 \centering
\includegraphics[width=0.8\linewidth]{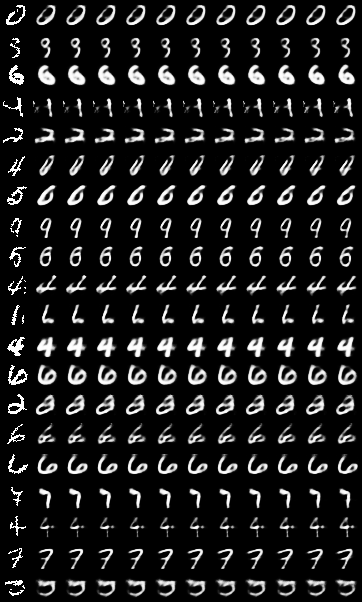}\vspace{-1mm} 
 \caption{\small Most improved.}\vspace{-1mm}
 \end{subfigure}%
\caption{\small VAE reconstructions of binarized MNIST testing images by MIVI ($\zv \in\mathbb{R}^{10}$).
 }\label{fig:mnist_test}
\end{figure}

\end{document}